\title{A Law of Iterated Logarithm for Multi-Agent Reinforcement Learning}
\author{%
  Gugan Thoppe \\
  Computer Science and Automation\\
  Indian Institute of Science\\
  Bengaluru, Karnataka 560012, India\\
  \texttt{gthoppe@iisc.ac.in} 
  \And
  Bhumesh Kumar \\
  Electrical and Computer Engineering\\
  University of Wisconsin at Madison\\
  Madison, WI 53706, USA\\
  \texttt{bkumar@wisc.edu}
}
\date{\today}
\renewcommand{\Re}{\textnormal{Re}}
\newcommand{\me}{\mathrm{e}}
\newcommand{\indc}{\mathds{1}}
\newcommand{\LL}{\textnormal{LL}}
\newcommand{\cA}{\mathcal{A}}
\newcommand{\cE}{\mathcal{E}}
\newcommand{\cF}{\mathcal{F}}
\newcommand{\cG}{\mathcal{G}}
\newcommand{\cN}{\mathcal{N}}
\newcommand{\cO}{\mathcal{O}}
\newcommand{\cP}{\mathcal{P}}
\newcommand{\cR}{\mathcal{R}}
\newcommand{\cS}{\mathcal{S}}
\newcommand{\cU}{\mathcal{U}}
\newcommand{\cV}{\mathcal{V}}
\newcommand{\bC}{\mathbb{C}}
\newcommand{\thS}{\theta_*}
\newcommand{\Jm}{J^\mu}
\newcommand{\lambdap}{\hat{\lambda}}
\renewcommand{\Re}{\mathscr{R}}
\newcommand{\df}[1]{\textnormal{d#1}}
\renewcommand{\df}{\textnormal{d}}
\newcommand{\xS}{x_*}
\newcommand{\yS}{y_*}
\newcommand{\Id}{\mathbb{I}}
\newcommand{\Real}{\mathbb{R}}
\newcommand{\tr}{\prime}
\newcommand{\ones}{\textbf{1}}
\renewcommand{\Pr}{\mathbb{P}}
\newcommand{\E}{\mathbb{E}}
\newcommand{\ExP}{\mathbb{E}}
\newcommand{\cmt}[1]{}
\newtheorem{theorem}{Theorem}
\numberwithin{theorem}{section}
\newtheorem{lemma}[theorem]{Lemma}
\newtheorem{remark}[theorem]{Remark}
\begin{document}
\maketitle

\begin{abstract}
In Multi-Agent Reinforcement Learning (MARL), multiple agents interact with a common environment, as also with each other, for solving a shared problem in sequential decision-making. It has wide-ranging applications in gaming, robotics, finance, etc. In this work, we derive a novel law of iterated logarithm for a family of  distributed nonlinear stochastic approximation schemes that is useful in MARL. In particular, our result describes the convergence rate on almost every sample path where the algorithm converges. This result is the first of its kind in the distributed setup and provides deeper insights than the existing ones, which only discuss convergence rates in the expected or the CLT sense. Importantly, our result holds under significantly weaker assumptions: neither the gossip matrix needs to be doubly stochastic nor the stepsizes square summable. As an application, we show  that, for the stepsize $n^{-\gamma}$ with $\gamma \in (0, 1),$ the distributed TD(0) algorithm with linear function approximation has a convergence rate of $\cO(\sqrt{n^{-\gamma} \ln n })$ a.s.; for the $1/n$ type stepsize, the same is $\cO(\sqrt{n^{-1} \ln \ln n})$ a.s. These decay rates do not depend on the graph depicting the interactions among the different agents. 
\end{abstract}

\section{Introduction}
\label{s:Intro}
Can a machine train itself in the same way an infant learns to sit up, crawl, and walk? That is, can a device interact with the environment and figure out the action sequence required to complete a given task? The study of \emph{algorithms} that enable such decision-making is what the field of Reinforcement Learning (RL) is all about \cite{sutton2018reinforcement}. In contrast, the \emph{mathematics} needed to analyze such schemes is what forms the focus in Stochastic Approximation (SA) theory \cite{benaim1999dynamics, borkar2009stochastic}. More generally, SA refers to an iterative scheme that helps find zeroes or optimal points of a function, for which only noisy evaluations are possible. In this work, we analyze a family of Distributed Stochastic Approximation (DSA) algorithms \cite{mathkar2016nonlinear} that is useful in Multi-Agent Reinforcement Learning (MARL) \cite{littman1994markov, zhang2021multi}. 

In the MARL framework, we have multiple agents or learners that continually engage with a shared environment: the agents pick local actions, and the environment responds by transitioning to a new state and giving each agent a different local reward. Additionally, the agents also \emph{gossip} about local computations with each other. The goal of the agents is to cooperatively find action policies that maximize the collective rewards obtained over time. Algorithms useful in this endeavor have found empirical success in domains as diverse as gaming \cite{openAI}, robotics \cite{ota2006multi}, autonomous driving \cite{shalev2016safe}, communication networks \cite{littman2013distributed}, power grids \cite{riedmiller2000reinforcement}, and economics \cite{lee2002stock}. However, theoretical analyses of such schemes are still very minimal, and this is what this paper aims to address.

For the purpose of analysis, MARL methods are often viewed as special cases of DSA algorithms. The archetypical form of a DSA scheme with $m$ distributed nodes can be described as follows. Let $\cG$ be a directed graph representing the connections between these nodes, and  $W \equiv (W_{ij}) \in [0, 1]^{m \times m}$ a matrix whose $ij$-th entry denotes the strength of the edge $j \to i.$ It is assumed that $W$ is compatible with $\cG,$ i.e., $W_{ij} > 0$ only if $j \to i \in \cG.$ Then, at agent $i,$ the above scheme (written as row vectors) has the update rule
\begin{equation}
\label{e:DSA.Agent.i}
    x_{n + 1}(i) = \sum_{j \in \cN_i} W_{ij} x_n(j) + \alpha_n [h_i(x_n) + M_{n + 1}(i)], \qquad n \geq 0,
\end{equation}
where $x_n \in \Real^{m \times d}$ is the joint estimate of the solution at time $n,$ its $j$-th row, i.e., $x_n(j)$ denotes\footnote{By default, all our vectors are row vectors. We use $^\tr$ for conjugate transpose.} the estimate obtained at agent $j,$ $\cN_i$ represents the set of in-neighbors of node $i$ in $\cG,$ $\alpha_n$ is the stepsize, $h_i: \Real^{m \times n} \to \Real^d$ is the driving function at agent $i$, and $M_{n + 1}(i) \in \Real^d$ is the noise in its evaluation at time $n.$ This update rule has two parts: a weighted average of the estimates obtained by gossip and a refinement based on local computations. Clearly, the joint update rule of all the agents is 
\begin{equation}
\label{eqn:disSA}
    x_{n + 1} = W x_n + \alpha_n [h(x_n) + M_{n + 1}],
\end{equation}
where $M_{n + 1}$ is the $m \times d$ matrix whose $i$-th row is $M_{n + 1}(i),$ and $h$ is the function that maps $x \in \Real^{m \times d}$ to the $m \times d$ matrix whose $i$-th row is $h_i(x).$

Two important points about the above framework are as follows: i.) we allow $h_i$ to be a function of all of $x_n$ and not just of $x_n(i),$ as is commonly assumed, and ii.) the computations at different nodes in the above setup run synchronously on a common clock. 

\textbf{Related Work}: We now give a summary of relevant theoretical results from the DSA and MARL literature. For ease of discussion, we categorize them into i.) asymptotic and ii.) finite-time results.  

The asymptotic ones mainly concern almost sure (a.s.) convergence \cite{tsitsiklis1986distributed, bianchi2013performance, morral2014success,  mathkar2016nonlinear, kar2013cal, zhang2018fully, zhang2018networked, suttle2020multi, lee2018primal}. The first four papers here provide convergence guarantees for a broad family of nonlinear DSA algorithms. The other articles also do the same, but in context of specific MARL schemes such as distributed Q-learning, distributed actor-critic methods, distributed TD methods, and their off-policy variations. Two other kinds of asymptotic results also exist in the literature. The first is the CLT shown in \cite{morral2017success} for the average of estimates obtained at different nodes in a generic DSA scheme. The other is the convergence in mean result obtained in \cite{zeng2020decentralized} for a distributed policy gradient method.

Finite-time literature, in contrast, majorly talks about expectation bounds. Assuming there exists a unique $\xS$ that solves $\sum_{i = 1}^m h_i(x) = 0,$ these results describe the rate at which $\E\|x_n - \xS\|$ decays with $n.$ Notable contributions for DSA here are \cite{zeng2020finite, wai2020convergence}. Compared to ours, these look at a slightly different setup: the measurement noise at each node has a Markov component in place of a martingale difference term. In this new setup, \cite{zeng2020finite} shows that, for any sufficiently small but constant stepsize $\epsilon$, the expected error decreases linearly to a ball of radius $\cO(\sqrt{\epsilon \ln (1/\epsilon)}).$ On the other hand, \cite{wai2020convergence} deals with the case where $h$ is additionally non-convex and shows that $\E\|x_n - \xS\| = \cO(n^{-1/4}\sqrt{\ln n}),$ which is comparable to the best known bound in the centralized setting.

Expectation bounds in the MARL framework primarily concern policy evaluation methods \cite{doan2019finite, doan2021finite, sun2020finite, chen2021multi}. The first three papers here deal with the distributed TD(0) method. These show that a result similar to the one in \cite{zeng2020finite} holds for this method under constant stepsizes. In contrast, when $\alpha_n$ is of the $1/n$ type, it is proven that $\E\|x_n - \xS\| = \cO(1/\sqrt{n}).$ Similar bounds have also been derived in \cite{chen2021multi} for two distributed variants of the TDC method. There are also some other works that derive finite-time bounds \cite{wai2018multi, ding2019fast,  xu2020voting, zhao2020distributed, heredia2020finite, stankovic2020distributed, ren2021communication, zhang2021finite}, but we do not discuss them in this paper since the algorithms proposed there do not fit the update rule given in \eqref{eqn:disSA}. 

The different finite-time results, as also the asymptotic CLT, do provide insights into the rate at which an iterative method converges. However, there are some significant issues with these studies. First, except \cite{morral2017success}, all others require the gossip matrix to be doubly stochastic, at least in the mean. While this assumption simplifies the analysis, it also severely restricts the communication protocol choices. In fact, as pointed out in \cite{morral2017success}, this condition even limits the use of a natural broadcast node, one that transmits its local estimate to all the neighbors without expecting all of them to respond. Second, these works only talk about convergence rates in the expected or the CLT sense. By their very nature, these results do not reveal much about the decay rates along different sample paths. Finally, all current results, including the ones on convergence, only apply to constant or square-summable stepsizes. Nothing is known about the slowly-decaying non-square-summable ones, which are generally preferable since they give similar benefits as constant stepsizes and, often, also guarantee convergence. Note that such issues also plague much of the distributed stochastic optimization literature \cite{yuan2016convergence, sun2019decentralized, lian2017can, koloskova2020unified, pu2020asymptotic, pu2020distributed}.


\textbf{Key Contributions}: The highlights of this work are as follows. 
\begin{enumerate}[leftmargin=*]
    \item \textbf{Law of Iterated Logarithm (LIL)}: We derive a novel law of iterated logarithm for the DSA scheme given in \eqref{eqn:disSA}. That is, for a suitably defined $\xS,$ we show that $\limsup [\alpha_n \ln t_{n + 1}]^{-1/2} \|x_n - \xS\| \leq C$ a.s. on every sample path in the event
    \begin{equation}
    \label{e:Conv.Event}
        \cE(\xS) := \{x_n \to \xS\}.
    \end{equation}
    Here, $C \geq 0$ is some constant\footnote{Throughout, $C$ denotes a generic constant. Its value could be different each time
    it is used; in fact, it could be different even in the same line.} and $t_{n} = \sum_{k = 0}^{n - 1} \alpha_k.$ Also, the norm that we work is the operator norm. In particular, for any $x \in \bC^{m \times d},$ 
    \begin{equation}
        \|x\| := \sup_{u \in \bC^m, v \in \bC^d} \{|uxv^\tr| : \|u\| = \|v\| = 1\}.
    \end{equation}
    This result is the first of its kind in the distributed setup. Further, as discussed in Remark~\ref{r:LIL} later, it provides deeper insights about the asymptotic behavior of $(x_n)_{n \geq 0}$ than other existing results, which only discuss convergence rates in the expected or the CLT sense.
    
    \item \textbf{Analysis and Gossip Matrix}: The above result is obtained via a new approach we develop here for analyzing DSA schemes. Let $\pi \in \Real^m$ be such that $\pi W = \pi$ and let
    \begin{equation}
    \label{e:Q.defn}
        Q := \Id - \ones^\tr \pi,
    \end{equation} where $\ones \in \Real^{m}$ denotes the vector of all ones. Then an outline of our approach is that we express $x_n - \xS$ as a sum of $\ones^\tr \pi (x_n - \xS)$ and $Qx_n$ and, thereafter, analyze each summand by treating its update rule as a separate SA scheme. This contrasts the usual approach (e.g., \cite{morral2017success, doan2019finite, doan2021finite}) where the error is split into $(\ones'\ones/m) (x_n - \xS)$ and $(\Id - \ones^\tr \ones/m)x_n.$ In fact, this is the main reason why, unlike other existing results, ours does not require that the gossip matrix be doubly stochastic. 
    
    \item \textbf{Concentration Inequality and Stepsizes}: We also improve upon an existing concentration result (\cite[Corollary~6.4.25]{duflo2013random}) for a sum of martingale differences; see Lemma~\ref{lem:Almost_Sure_Rate_MDiffSum}. Specifically, by modifying the original proof from \cite{duflo2013random}, we show that the result stated there actually holds under a broader set of conditions. The key  benefit of this is that, unlike other related results, our LIL result does not require that the stepsize sequence be square-summable. 
    
    \item \textbf{MARL Application}: We use our theory to prove a law of iterated logarithm for the distributed TD(0) algorithm with linear function approximation. This is the first such result in MARL.
\end{enumerate}

\textbf{Contents}: The rest of the paper is structured as follows. In Section~\ref{s:Main.Results}, we formally state our main result along with all the assumptions needed. We also pinpoint the new insights that our result provides. In Section~\ref{s:dis.TD0}, we give a demonstration of how our result can be applied in the MARL setup. In particular, there we talk about the distributed TD(0) algorithm with linear function approximation and prove that it indeed satisfies all the assumptions of our main result. Section~\ref{s:Proofs} has two parts. In the first part, we state some key intermediate lemmas and then use the same to derive our main result. The latter part, in contrast, focuses on proofs of these intermediate results; note that we only sketch their proofs here and  leave the details to the appendix. Finally, in Section~\ref{s:Discussion}, we conclude with a summary of our findings and discuss some interesting future directions.

\section{Assumptions and Main Result}
\label{s:Main.Results}
Throughout this work, we assume that the following four technical assumptions, i.e., $\cA_1, \ldots, \cA_4,$ hold for the DSA scheme in \eqref{eqn:disSA}.
{
\renewcommand{\theenumi}{$\cA_\arabic{enumi}$}
\begin{enumerate}[leftmargin=*]
    \item \label{a:Gossip} \textbf{Property of the Gossip Matrix}: \emph{$W$ is an irreducible aperiodic row stochastic matrix}.
\end{enumerate}
}

This condition implies there exists a unique vector $\pi \in \Real^m$ such that
\begin{equation}
\label{eqn:Stat_dist}
        \pi W  = \pi.
\end{equation}
Accordingly, based on \cite[Theorem~1]{mathkar2016nonlinear}, one would expect \eqref{eqn:disSA} to eventually converge to an invariant set of the $m$-fold product of the $d$-dimensional ODE 
\begin{equation}
\label{eqn:limODE}
    \dot{y}(t) = \sum_{i = 1}^m \pi_i h_i(\ones^\tr y(t)) = \pi h(\ones^\tr y(t)).
\end{equation}
By an $m$-fold product, we refer to the dynamics in $\Real^{m \times d}$ where each row individually satisfies \eqref{eqn:limODE}. A natural invariant set of this dynamics is  $\cS := \{\ones^\tr y: y \in \Real^d\} \subset \Real^{m \times d}.$ With this in mind, let
\begin{equation}
    \label{e:Defn.xS}
    \xS = \ones^\tr \yS \in \cS,
\end{equation} 
where $\yS \in \Real^{d}$ is an asymptotically stable equilibrium of \eqref{eqn:limODE}. Notice that we don't assume $\yS$ to be the only attractor of this ODE.

We remark that our main result concerns the behavior of the DSA scheme on the event $\cE(\xS),$ where $\xS$ is as defined above and $\cE(\xS)$ is as defined in \eqref{e:Conv.Event}.

{
\renewcommand{\theenumi}{$\cA_\arabic{enumi}$}
\begin{enumerate}[leftmargin=*]   
    \setcounter{enumi}{1}

    \item \label{a:function} \textbf{Nature of $h$ near $\xS$}: \emph{There exists a neighbourhood $\cU$ of $\xS$ such that, for $x \in \cU,$
    \begin{equation}
        \label{e:h.Form}
        h(x) = - \ones^\tr\pi(x - \xS)A + \ones^\tr \pi f_1(x) + Q(B + f_2(x)),
    \end{equation}
    where $A \in \Real^{d \times d}$ is such that $y A y^\tr > 0$ for all $y \neq 0,$ $B \in \Real^{m \times d}$ is some constant matrix, $f_2: \cU \to \Real^{m \times d}$ is some arbitrary continuous function, while   $f_1: \cU \to \Real^{m \times d}$ is another continuous function that additionally satisfies
    \begin{equation}
    \label{e:one.pi.nonlinear.Bd}
        \|\ones^\tr \pi f_1(x)\| = \cO(\|\ones^\tr \pi(x - \xS)\|^a), \qquad \text{ as } x \to \xS,
    \end{equation}
    for some $a > 1.$}
\end{enumerate}
}    

Note that \ref{a:function} is a generalization of \textbf{Assumption (A1)} in \cite{pelletier1998almost}. As in \cite{pelletier1998almost}, this also is local in nature: it only prescribes a specific behavior for $h$  close to $\xS.$ Furthermore, this condition ensures that the driving function $\pi h(x)$ in \eqref{eqn:limODE} equals $-\pi(x - \xS) A + \pi f_1(x);$ the first term is the linear part while the second term represents the nonlinear portions. Separately, observe that $Qh(x) = Q(B + f_2(x)).$ This plays no role in \eqref{eqn:limODE}; hence, conditions on $B$ and $f_2$ are minimal. We now construct a family of examples to show that \ref{a:function} broadly holds. The simplest member in this family is $h(x) = B - xA,$ where $B$ and $A$ are as defined above\footnote{A verification of all the conditions mentioned in \ref{a:function} for $h(x) = B - xA$ has been done in Section~\ref{s:dis.TD0}; there we also discuss usefulness of this function in the context of policy evaluation in MARL.}. Clearly, if $b(i)$ and $x(i)$ are the $i$-th rows of $B$ and $x,$ respectively, then the $i$-component function here is $h_i(x) = b(i) - x(i)A.$ The fact that the scaling matrix $A$ is the same for each $i$ is crucial for \ref{a:function} to hold. Also, observe that this function does not depend on $\pi.$ The other members of the family are obtained by adding various $\pi$-dependent nonlinear perturbations to this simple setup, i.e., by making different choices\footnote{For example, we can let $f_1(x) = g_1(\pi (x - \xS))$ and $f_2(x) = -xA + g_2(x),$ 
where $g_1: \Real^d \to \Real^{m \times d}$ is such that, as $z \to 0,$ $\|g_1(z)\| = \cO(\|z\|^a)$ for some $a > 1,$ and $g_2$ is an arbitrary continuous function.} for $f_1$ and $f_2.$

{
\renewcommand{\theenumi}{$\cA_\arabic{enumi}$}
\begin{enumerate}[leftmargin=*]   
    \setcounter{enumi}{2}  
    \item \label{a:Stepsize} \textbf{Stepsize Behavior}: \emph{There exists some decreasing positive function $\alpha$ defined on $[0, \infty)$ such that the stepsize $\alpha_n = \alpha(n).$ Further, $\alpha$ is either of Type $1$ or Type $\gamma.$}
    \begin{enumerate}
        \item \emph{Type $1$: $\alpha(n) = \alpha_0/n$ for some $\alpha_0 > 1/(2\lambda_{\min}),$ where}
        \begin{equation}
        \label{e:lambda.min}
            \lambda_{\min} := \min\{\Re(\lambda) : \lambda \in \textnormal{spectrum}(A)\}
        \end{equation}
        with $\Re(\lambda)$ denoting the real part of $\lambda;$
        
        \item \emph{Type $\gamma:$ The function $\alpha$ is differentiable and its derivative varies regularly with exponent $-1 - \gamma,$ where $0 <  \gamma < 1.$}
    \end{enumerate}
\end{enumerate}
}

The regularly varying condition above implies that $\left|\frac{\df{\alpha(x)}}{\df{x}} \right| = x^{-\gamma - 1} L(x)$ for some slowly varying function $L,$ e.g., $L(x) = C$ for some  $C > 0,$ or $L(x) = (\ln x)^\eta$ for some $\eta \in \Real.$ Thus, examples of $\alpha_n$ here include $C n^{-\gamma}$ and $n^{-\gamma} (\ln n)^\eta,$ which are non-square-summable for $\gamma \in (0, 1/2].$

{
\renewcommand{\theenumi}{$\cA_\arabic{enumi}$}
\begin{enumerate}[leftmargin=*]   
    \setcounter{enumi}{3}  
    \item \label{a:Noise} \textbf{Noise Attributes}: \emph{With  $\cF_n = \sigma(x_0, M_1, \ldots, M_n),$ and $\cE(\xS),$ as in \eqref{e:Conv.Event}, the following hold.}
    \begin{enumerate}
        \item \emph{$\ExP(M_{n + 1}|\cF_n) = 0$ a.s.}
        
        \item \emph{There exists $C \geq 0$ such that $\|Q M_{n + 1}\| \leq C \left(1 + \|Q(x_n - \xS)\|\right) \text{ a.s. on } \cE(\xS).$}
        
        \item \emph{There is a non-random symmetric positive semi-definite matrix $M \in \Real^{d \times d}$ such that
        \begin{equation}
            \lim_{n \to \infty} \ExP(  M_{n + 1}^\tr \pi^\tr \pi M_{n + 1}\,| \,\cF_n) = M \quad \text{ a.s. on } \cE(\xS). 
        \end{equation}}
        \item \emph{There exists $b > 2$ such that $\sup_{n \geq 0} \ExP(\|\pi M_{n + 1}\|^b|\cF_n) < \infty$ \text{ a.s. on } $\cE(\xS).$} 
    \end{enumerate}
\end{enumerate}
}

These noise conditions are extensions of the standard assumptions in the SA literature \cite{pelletier1998almost, mokkadem2006convergence, borkar2009stochastic}. 

Our main result can now be stated as follows. This generalizes Theorem~1 from \cite{pelletier1998almost}. \\[-1ex]

\begin{theorem}[\textbf{Main Result: Law of Iterated Logarithm}]
\label{thm:Main Result}
Suppose \ref{a:Gossip}, \ldots, \ref{a:Noise} hold and  $\gamma > 2/b$ if $\alpha$ is of Type $\gamma.$ Then, there exists some deterministic constant $C \geq 0$ such that
\[
    \limsup [\alpha_n \ln {t_{n + 1}}]^{-1/2}\|x_n - \xS\| \leq C \quad \text{a.s. on $\cE(\xS).$}
\]
\end{theorem}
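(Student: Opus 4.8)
The plan is to exploit the error decomposition highlighted in the introduction. Write $x_n-\xS=\ones^\tr\bar y_n+z_n$ with $\bar y_n:=\pi(x_n-\xS)\in\Real^{1\times d}$ and $z_n:=Q(x_n-\xS)=Qx_n\in\Real^{m\times d}$ (the last equality since $Q\xS=Q\ones^\tr\yS=0$). Multiplying \eqref{eqn:disSA} on the left by $\pi$ and by $Q$, and using $\pi W=\pi$, $W\ones^\tr=\ones^\tr$, $\pi\ones^\tr=1$, $WQ=QW$, $\pi Q=0$, $Q\ones^\tr=0$, $Q^2=Q$, together with the local form \eqref{e:h.Form} of $h$ (valid once $x_n$ enters $\cU$, which on $\cE(\xS)$ happens for all large $n$), one gets the two autonomous recursions
\begin{align}
\bar y_{n+1}&=\bar y_n(\Id-\alpha_n A)+\alpha_n\bigl[\pi f_1(x_n)+\pi M_{n+1}\bigr],\label{e:ybar.rec}\\
z_{n+1}&=Wz_n+\alpha_n\bigl[Q(B+f_2(x_n))+QM_{n+1}\bigr].\label{e:z.rec}
\end{align}
I will treat each line as a stochastic approximation scheme in its own right, bound $\bar y_n$ and $z_n$ separately, and recombine via $\|x_n-\xS\|\le\sqrt m\,\|\bar y_n\|+\|z_n\|$.

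\emph{Consensus error $z_n$.} On $\cE(\xS)$ we have $z_n\to0$ and $f_2(x_n)\to f_2(\xS)$, so $\|Q(B+f_2(x_n))\|\le C$ eventually, while \ref{a:Noise}(b) gives $\|QM_{n+1}\|\le C(1+\|z_n\|)$. Since $W$ leaves the subspace $\mathrm{range}(Q)=\{v:\pi v=0\}$ invariant and by \ref{a:Gossip} its spectral radius there equals the subdominant eigenvalue modulus of $W$, hence is $<1$, the powers $W^k$ decay geometrically on $\mathrm{range}(Q)$. Unrolling \eqref{e:z.rec}, using that $\alpha$ decays slowly (so $\alpha_{n-j}\le C\alpha_n$ over the $O(\ln n)$ lags that matter), the deterministic part is $\cO(\alpha_n)$; the martingale part is handled with Lemma~\ref{lem:Almost_Sure_Rate_MDiffSum} and the conditional second-moment bound $\|QM_{n+1}\|^2=\cO(1+\|z_n\|^2)$. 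After a short bootstrap (a crude a.s. bound on $\|z_n\|$ first), this yields $\|z_n\|=\cO(\sqrt{\alpha_n\ln t_{n+1}})$ — indeed $o$ of it — a.s. on $\cE(\xS)$; this is the step that makes the final rate independent of the interaction graph.

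\emph{Drift error $\bar y_n$.} Equation \eqref{e:ybar.rec} is a single-timescale SA driven by the Hurwitz-type matrix $-A$ (recall $yAy^\tr>0$), a higher-order perturbation obeying $\|\pi f_1(x_n)\|=m^{-1/2}\|\ones^\tr\pi f_1(x_n)\|=\cO(\|\bar y_n\|^a)$ with $a>1$ by \eqref{e:one.pi.nonlinear.Bd}, and martingale noise $\pi M_{n+1}$ with $\E(M_{n+1}^\tr\pi^\tr\pi M_{n+1}\mid\cF_n)\to M$ and $\sup_n\E(\|\pi M_{n+1}\|^b\mid\cF_n)<\infty$, $b>2$. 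These are precisely the hypotheses under which the law of iterated logarithm of \cite{pelletier1998almost} applies — Type~1 needs $\alpha_0>1/(2\lambda_{\min})$ and Type~$\gamma$ needs $\gamma>2/b$ — except that \cite{pelletier1998almost} also assumes $\sum_n\alpha_n^2<\infty$. I will re-run that argument: first a rough a.s. rate for $\bar y_n$, then a bootstrap using \ref{a:function} to absorb the $f_1$ term into a negligible remainder, then a comparison of the linearized process with the martingale $\sum_{k}\alpha_k(\pi M_{k+1})\prod_{j=k+1}^{n-1}(\Id-\alpha_j A)$ and the LIL for this martingale — with every invocation of square-summability replaced by Lemma~\ref{lem:Almost_Sure_Rate_MDiffSum}. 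This produces $\limsup[\alpha_n\ln t_{n+1}]^{-1/2}\|\bar y_n\|\le C$ a.s. on $\cE(\xS)$, and combining with the $z_n$ bound finishes the proof. (As a sanity check, $\ln t_{n+1}\sim\ln\ln n$ for Type~1 and $\sim(1-\gamma)\ln n$ for Type~$\gamma$, matching the rates quoted in the abstract.)

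\emph{Main obstacle.} The crux is the $\bar y_n$ step, i.e.\ extending Pelletier's LIL to non-square-summable stepsizes. Two points are delicate: (i) the martingale approximation and the control of its remainder must go through using only the $b$-th moment bound ($b>2$) with $\sum_n\alpha_n^2=\infty$ — this is exactly what forces the hypothesis $\gamma>2/b$ and what the improved concentration inequality Lemma~\ref{lem:Almost_Sure_Rate_MDiffSum} is meant to supply; and (ii) \ref{a:function} and \ref{a:Noise} only hold on $\cE(\xS)$ (and past the random time at which $x_n$ enters $\cU$), so the whole analysis must be localized by a stopping-time truncation that leaves the $\limsup$ unchanged.
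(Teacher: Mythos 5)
Your proposal is correct and follows essentially the same route as the paper: the identical split of $x_n-\xS$ into the agreement part $\ones^\tr\pi(x_n-\xS)$ and the disagreement part $Qx_n$, a Pelletier-style martingale-comparison argument for the agreement part in which Lemma~\ref{lem:Almost_Sure_Rate_MDiffSum} replaces the square-summability requirement, and geometric contraction of $W$ on the range of $Q$ (plus \ref{a:Noise}(b)) for the disagreement part. The only material difference is that the paper sharpens the disagreement bound to $\cO\left(\alpha_n(\ln n)^{1+\delta}\right)$ via the further decomposition $Qx_n=\Gamma_n+\chi_n$, whereas your weaker $\cO\left(\sqrt{\alpha_n\ln t_{n+1}}\right)$ claim already suffices for the theorem.
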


This result is called a law of iterated logarithm since its proof crucially relies on Lemma~\ref{lem:Almost_Sure_Rate_MDiffSum}, which indeed is a law of iterated logarithm for a sum of scaled martingale differences. We end this section with some important comments about our main result.

\begin{remark}
Our result shows that, a.s. on $\cE(\xS),$ $\|x_n - \xS\|$ is $\cO(\sqrt{n^{-1} \ln \ln n})$ in the Type~1 case, and $\cO(\sqrt{n^{-\gamma} \ln n})$ in the Type~$\gamma$ case. Note that, since we require $\gamma > 2/b,$ our result applies for smaller values of $\gamma,$ only if \ref{a:Noise}.\textnormal{(d)} holds for a sufficiently large $b.$
\end{remark}

\begin{remark}
\label{r:LIL}
Our result provides deeper insights than the convergence rates that exist in the DSA/MARL literature. For this discussion, we suppose $\Pr\{\cE(\xS)\} = 1.$ As mentioned in Section~\ref{s:Intro}, the existing results are of two kinds: finite-time expectation bounds and the CLT. Indeed a finite-time bound has several benefits and is not directly comparable to an asymptotic result. Nevertheless, an expectation bound only describes the average behavior, while ours characterizes the decay rate on almost every sample path. In fact, if we compare just the decay rate obtained in our result in the Type~1 case with that obtained in \cite{doan2019finite, doan2021finite}, which  show $\E\|x_n - \xS\| = \cO(\sqrt{\ln n}/\sqrt{n}),$ then ours is tighter (it has $\ln \ln n$ in place of $\ln n$). Furthermore, while a CLT can at the best show that $\limsup \alpha_n^{-1/2} \|x_n - \xS\| = \infty$ a.s., our result is more precise in stating that the expression becomes bounded if it is divided by an additional $\sqrt{\ln t_{n + 1}}$ term.
\end{remark}
    
\section{Application to Reinforcement Learning}
\label{s:dis.TD0}
We apply our result here to a variant of the distributed TD(0) algorithm \cite{doan2019finite, doan2021finite} with linear function approximation. This method is useful for policy evaluation in MARL. The discussion here is divided into the following three parts: i.) setup, ii.) objective and algorithm, and iii.) analysis.

\textbf{Setup}: We consider a distributed system of $m$ agents modeled by a Markov Decision Process. This can be characterized by the tuple $(\cS, \{\cU_i\}, \cP, \{\cR_i\}, \gamma, \cG).$ Here, $\cS = \{1, \ldots, L\}$ is the global state space, $\cU_i$ and $\cR_i$ are the set of actions and the reward function at agent $i,$ respectively, $\cP$ describes the transition probabilities, $\gamma$ is the discount factor, and $\cG \equiv (\cV, \cE)$ is a directed graph that represents the connectivity structure among the $m$ agents. 

Let $\cN_i$ and $W$ be as in \eqref{e:DSA.Agent.i}. We assume that this $W$ satisfies the conditions in \ref{a:Gossip}. Then, for this matrix, there is a unique vector $\pi \equiv (\pi_i)$ satisfying \eqref{eqn:Stat_dist}.

Let $\mu_i$ be the stationary policy of agent $i$ and let $\mu \equiv (\mu_i).$ Also, let $\mu(a | s) = \prod_{i} \mu_i(a_i |s)$ be the probability for choosing the joint action $a \equiv (a_i) \in \prod_i \cU_i.$ This policy $\mu$ then induces a Markov chain on $\cS,$ which we assume is aperiodic and irreducible. Therefore, it also has a unique stationary distribution and we denote the same by $\varphi \in \Real^{L}.$ 

At each step, the above system evolves as follows. First, each agent $i$ sees the current state $s$ and applies an action $a_i \in \cU_i$ sampled from $\mu_i(\cdot | s).$ Based on the joint action $a,$ the system then moves to a  new state $\tilde{s}.$ Equivalently, the joint action $a$ and the state $\tilde{s}$ can be seen as samples of $\mu(\cdot | s)$ and $\cP(\cdot |s, a),$ respectively. Finally, each agent $i$ receives an instantaneous reward $\cR_i(s, a, \tilde{s}).$

\textbf{Objective and Algorithm}: The goal of the multi-agent system is to cooperatively estimate the value function $\Jm \in \Real^{L}$ corresponding to $\mu.$ This is defined as the solution to the Bellman equation
\[
    \Jm(s) = \E\bigg[\sum_{i} \pi_i \cR_i(s, a, \tilde{s})  + \gamma \Jm(\tilde{s})\bigg], \quad s \in \cS,
\]
where the expectation is over $a \sim \mu(\cdot|s)$ and $\tilde{s} \sim \cP(\cdot|s, a).$ This expression differs from the ones in \cite{doan2019finite, doan2021finite}, in that, the coefficients $\pi_i$ here is replaced by $1/m$ there. When $L$ is large, estimating $\Jm$ directly is intractable. An alternative then is to make use of linear function approximation. That is, for some $d,$ choose a feature matrix $\Phi \in \Real^{L \times d}$ with full column rank. And then, with $\phi(s)$ denoting the $s$-th row of $\Phi,$ try and find a $\theta \in \Real^{d}$ such that $\Jm(s)\approx \phi(s) \theta^\tr$ for all $s \in \cS.$

The distributed TD(0) algorithm is helpful in this latter context. Let $(s_n, a_n, \tilde{s}_n),$ $n \geq 0,$ be IID\footnote{
The IID assumption is standard in literature \cite{dalal2018finite, liu2015finite, sutton2008convergent, tsitsiklis1997analysis} and is needed to ensure that the update rule only has martingale noise. Otherwise, the update rule will additionally have Markovian noise, the analysis of which is beyond the scope of this paper. The good news though is that, as shown in previous works \cite{kaledin2020finite, doan2019finite, doan2021finite}, the asymptotic behaviours with and without the Markovian noise are often similar.} samples of $(s, a, \tilde{s}),$ where $s \sim \varphi(\cdot), a \sim \mu(\cdot|s),$ and $\tilde{s} \sim \cP(\cdot|s, a).$ Then, at agent $i,$ this distributed algorithm has the update rule:
\begin{equation}
    \label{e:distributed.TD0}
    \theta_{n + 1}(i) = \sum_{j \in \cN_i}W_{ij} \theta_n(j) + \alpha_n (b_n(i) - \theta_n(i) A_n),
\end{equation}
where $b_n(i) = \cR_i(s_n, a_n, \tilde{s}_n) \phi(s_n) \in \Real^d$ and $A_n = \phi^\tr(s_n) \phi(s_n)  - \gamma\phi^\tr(\tilde{s}_n) \phi(s_n) \in \Real^{d \times d}.$  

\textbf{Analysis}: We first express the update rule given in \eqref{e:distributed.TD0} for different $i$ in the standard DSA form. Let\footnote{The matrix $A$ is usually defined to be the transpose of the expression we use.} $A = \E[A_n]$ and $B = \E[B_n],$ where $B_n \in \Real^{m \times d}$ is the matrix whose $i$-th row is $b_n(i).$ Both $A$ and $B$ do not depend on $n$ since $(s_n, a_n, \tilde{s}_n),$ $n \geq 0,$ is IID. Next, for $n \geq 0,$ let $x_n \in \Real^{m \times d}$ be the matrix whose $i$-th row is $\theta_n(i).$ Then, \eqref{e:distributed.TD0} for different $i$ can be jointly written as shown in \eqref{eqn:disSA} for 
\begin{equation}
    \label{e:Defn.h.M}
    h(x) = B - xA \qquad \text{ and } \qquad M_{n + 1} = (B_n - B) - x_n(A_n - A).
\end{equation}

Next, we look at the limiting ODE given in \eqref{eqn:limODE}. In our case, this has the form $\dot{y}(t) = \pi B - y(t) A.$ Now, $A$ is known to be positive definite \cite{sutton2018reinforcement}, i.e., $\theta A \theta^\tr > 0$ for all $\theta \neq 0.$ Hence, it is invertible and the real parts of all its eigenvalues are positive, i.e.,  $-A$ is Hurwitz stable. This then shows that $\thS = \pi B A^{-1}$ is the unique globally asymptotically stable equilibrium for the above ODE.

We now verify the assumptions stated in Section~\ref{s:Main.Results}. \ref{a:Gossip} trivially holds due to assumptions on $W.$ For \ref{a:function}, let $f_1(x) = 0$ and $f_2(x) = -xA$ for $x \in \Real^{m \times d}.$ Further, let $\xS = \ones^\tr \thS.$ Then, $h(x) = -\ones \pi(x - \xS)A  + (\Id - \ones\pi)(B + f_2(x)),$ as desired. In order to satisfy \ref{a:Stepsize}, we simply choose a stepsize sequence that fulfills one of the criteria mentioned there. 

It now only remains to establish \ref{a:Noise}. Let $\cF_n$ be as defined there. Then, part (a) follows from the definitions of $A$ and $B$ and the fact that $(s_n, a_n, \tilde{s}_n)$ is independent of the past. On the other hand, part (b) can be shown by building upon the arguments used in the proof of \cite[Lemma~5.1]{dalal2018finite}. Next observe that, since  $(s_n, a_n, \tilde{s}_n)$ is independent of the past, the only quantity that is random in $\E[M_{n + 1}' \pi' \pi M_{n + 1}|\cF_n]$ is $x_n.$ Also, trivially, $\E[M_{n + 1}' \pi' \pi M_{n + 1}|\cF_n]$ is a symmetric positive semi-definite matrix. Therefore, on the event $\cE(\xS),$ it is easy to see that part (c) holds as well. Finally, notice that $\|\pi M_{n + 1}\| \leq C(1 + \|x_n - \xS\|)$ for some $C \geq 0;$ this follows as in part (b) above. Hence, on $\cE(\xS),$ $\sup_{n \geq 0} \E[\|\pi M_{n + 1}\|^b |\cF_n] < \infty$ a.s. for any $b \geq 0.$ This verifies part (d).

Thus, Theorem~\ref{thm:Main Result} holds for the distributed TD(0) algorithm with linear function approximation.

\section{Theoretical Analysis: Proof of the Main Result}
\label{s:Proofs}
We now turn to the technical details of our analysis. With $Q$ as in \eqref{e:Q.defn} and $\xS$ as  in \eqref{e:Defn.xS}, observe that $\ones^\tr \pi \xS = \xS$ and, hence, $x_n - \xS = \ones^\tr \pi (x_n - \xS) + Q x_n.$ We refer to the first term in this decomposition as the agreement component of the error and the second as the disagreement component. This decomposition differs from the standard approaches \cite{doan2019finite, doan2021finite, morral2017success}, wherein $x_n - \xS$ is split into $(\ones^\tr \ones/m)(x_n - \xS)$ and $(\Id - (\ones^\tr \ones/m))x_n.$ In fact, the success of our approach strongly hinges on this novel error decomposition. 

The rest of the section is organized as follows. We first state our bounds for the two terms in our decomposition. Using these bounds, we then provide a formal proof for Theorem~\ref{thm:Main Result}. Thereafter, we sketch the proofs of these intermediate bounds, leaving the details to the appendix. 

\begin{lemma}{(Agreement Error)}
\label{lem:Bd ones pi (x_n - xS)}
Almost surely on $\cE(\xS),$
\begin{equation}
   \limsup_{n \rightarrow \infty}\frac{\|\ones^\tr \pi (x_n - \xS)\|}{\sqrt{\alpha_n \ln t_{n+1}}} \leq C,
\end{equation}
where $C \geq 0$ is some deterministic constant.
\end{lemma}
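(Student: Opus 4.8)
The plan is to reduce the agreement component to an ordinary (centralized) stochastic approximation recursion and then run a Pelletier-type law of iterated logarithm on it, the one new ingredient being the sharpened martingale bound of Lemma~\ref{lem:Almost_Sure_Rate_MDiffSum}, which is what lets us dispense with square-summability of $(\alpha_n).$

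\textbf{Step 1: a decoupled one-dimensional recursion.} Premultiply \eqref{eqn:disSA} by $\pi.$ Using $\pi W = \pi$ from \ref{a:Gossip} and writing $\zeta_n := \pi(x_n - \xS) \in \Real^{1 \times d},$ together with the identities $\pi \ones^\tr = \sum_i \pi_i = 1$ (hence $\pi \ones^\tr \pi = \pi$ and $\pi Q = 0$) and $\pi \xS = (\pi \ones^\tr)\yS = \yS,$ we get $\zeta_{n+1} = \zeta_n + \alpha_n [\pi h(x_n) + \pi M_{n+1}].$ On $\cE(\xS)$ the iterates eventually enter the neighbourhood $\cU$ from \ref{a:function}, and there \eqref{e:h.Form} yields $\pi h(x) = -\pi(x - \xS) A + \pi f_1(x)$ --- note that premultiplication by $\pi$ annihilates the $Q(B + f_2(x))$ term, which is the only place where the disagreement component could couple back; this is precisely the point of using the $\ones^\tr \pi$ decomposition in \eqref{e:Q.defn} rather than $\ones^\tr\ones/m.$ Hence, for all $n$ large enough on $\cE(\xS),$
\[
    \zeta_{n+1} = \zeta_n(\Id - \alpha_n A) + \alpha_n\, \pi f_1(x_n) + \alpha_n\, \pi M_{n+1}.
\]
Since $\|\ones^\tr w\| = \sqrt{m}\,\|w\|$ for every row vector $w,$ the lemma is equivalent to $\limsup_n \|\zeta_n\|/\sqrt{\alpha_n \ln t_{n+1}} \le C,$ and \eqref{e:one.pi.nonlinear.Bd} reads $\|\pi f_1(x_n)\| = \cO(\|\zeta_n\|^a)$ with $a > 1.$

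\textbf{Step 2: absorb the nonlinearity into the drift.} On $\cE(\xS),$ $\zeta_n \to 0.$ Since $\zeta_n$ and $\pi f_1(x_n)$ are $\cF_n$-measurable row vectors with $\|\pi f_1(x_n)\| = \cO(\|\zeta_n\|^a),$ we may write $\alpha_n \pi f_1(x_n) = -\alpha_n \zeta_n D_n,$ where $D_n := -\zeta_n^\tr \pi f_1(x_n)/\|\zeta_n\|^2$ (and $D_n := 0$ if $\zeta_n = 0$); then $D_n$ is $\cF_n$-measurable and $\|D_n\| = \cO(\|\zeta_n\|^{a-1}) \to 0.$ So, eventually on $\cE(\xS),$
\[
    \zeta_{n+1} = \zeta_n(\Id - \alpha_n \widehat{A}_n) + \alpha_n\, \pi M_{n+1}, \qquad \widehat{A}_n := A + D_n,
\]
with $\widehat{A}_n \to A.$ Since $\yS$ is asymptotically stable for \eqref{eqn:limODE}, $A$ is positive definite, so for every $\widehat{\lambda} \in (0, \lambda_{\min})$ the eigenvalues of $\widehat{A}_n$ eventually have real part exceeding $\widehat{\lambda};$ fix such a $\widehat{\lambda},$ additionally with $\alpha_0 > 1/(2\widehat{\lambda})$ in the Type~1 case. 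We have now reduced the agreement error to a linear, centralized SA recursion with martingale-difference noise and drift converging to the Hurwitz matrix $-A$ --- exactly the framework of \cite[Theorem~1]{pelletier1998almost}.

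\textbf{Step 3: unroll, estimate the semigroup, concentrate.} Unrolling from a large time $n_0$ gives $\zeta_n = \zeta_{n_0}\widehat{\Phi}_{n,n_0} + \sum_{k = n_0}^{n-1}\alpha_k\, \pi M_{k+1}\,\widehat{\Phi}_{n,k+1},$ where $\widehat{\Phi}_{n,k}$ is the ordered product $\prod_{j=k}^{n-1}(\Id - \alpha_j \widehat{A}_j).$ Standard semigroup estimates (using $\widehat{A}_n \to A$ and the lower bound $\widehat{\lambda}$) give $\|\widehat{\Phi}_{n,k}\| \le C e^{-\widehat{\lambda}(t_n - t_k)}$ up to a harmless constant factor, so the transient term is $\cO(e^{-\widehat{\lambda}t_n})$ in the Type~$\gamma$ case and $\cO(n^{-\widehat{\lambda}\alpha_0})$ up to logarithms in the Type~1 case, and in both cases this is $o(\sqrt{\alpha_n \ln t_{n+1}})$ (the Type~1 bound because $\widehat{\lambda}\alpha_0 > 1/2$). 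For the martingale term, an Abel-type rearrangement exhibits $\sum_k \alpha_k\, \pi M_{k+1}\,\widehat{\Phi}_{n,k+1}$ as a normalized martingale whose increments inherit the zero-mean, asymptotic-covariance, and bounded-$b$-th-moment properties from \ref{a:Noise}(a), (c), (d); Lemma~\ref{lem:Almost_Sure_Rate_MDiffSum} then bounds it by $\cO(\sqrt{\alpha_n \ln t_{n+1}})$ a.s.\ on $\cE(\xS),$ and in the Type~$\gamma$ case the hypothesis $\gamma > 2/b$ is exactly what this step requires. Adding the two contributions gives the lemma; the localization ``for $n$ large on $\cE(\xS)$'' is harmless because $\cE(\xS)$ constrains only the tail and $n_0$ may be taken random.

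\textbf{Expected main obstacle.} The crux is Step~3: making the semigroup-weighted martingale estimate go through without the classical requirement $\sum_n \alpha_n^2 < \infty,$ which is where the improvement of \cite[Corollary~6.4.25]{duflo2013random} recorded in Lemma~\ref{lem:Almost_Sure_Rate_MDiffSum} is essential. This in turn forces the two-case treatment of \ref{a:Stepsize} --- Type~1, where $t_n \asymp \ln n$ so the rate reads $\cO(\sqrt{n^{-1}\ln\ln n}),$ versus Type~$\gamma,$ where $t_n \asymp n^{1-\gamma}$ so the rate reads $\cO(\sqrt{n^{-\gamma}\ln n})$ and the constraint $\gamma > 2/b$ surfaces --- and calls for care with the slowly varying factor $L.$ The distributed structure, by contrast, adds nothing to the difficulty: the identity $\pi Q = 0$ in Step~1 already decouples the agreement component completely from the disagreement component.
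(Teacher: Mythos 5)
Your Step 1 is exactly the paper's starting point (project with $\pi$, use $\pi W=\pi$ and $\pi Q=0$ so the disagreement part never couples back), but the proof breaks at the combination of Steps 2 and 3. By absorbing the nonlinearity into the drift you replace $A$ by the \emph{random, time-varying} matrices $\widehat{A}_j = A + D_j$, where $D_j$ is a function of $x_j$. When you then unroll, the factor $\widehat{\Phi}_{n,k+1}=\prod_{j=k+1}^{n-1}(\Id-\alpha_j\widehat{A}_j)$ multiplying $\pi M_{k+1}$ involves $\widehat{A}_j$ for $j>k$, i.e.\ it depends on the noise $M_{k+2},\dots,M_{n}$. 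So $\sum_k \alpha_k\,\pi M_{k+1}\widehat{\Phi}_{n,k+1}$ is not a martingale transform with $\cF_k$-predictable weights, and Lemma~\ref{lem:Almost_Sure_Rate_MDiffSum} (which requires the scalars $\phi_k$ to be adapted, with weights depending on $k$ only) simply does not apply to it. Nor does an ``Abel-type rearrangement'' rescue this: the trick that works in the paper is to use the \emph{deterministic, fixed} matrix $A$ in the weights, project onto a generalized right eigenvector $w^\tr$ of $A$ so that $e^{-(t_{n+1}-t_{k+1})A}w^\tr$ factors as $e^{-\mu t_{n+1}}\cdot e^{\mu t_{k+1}}$ times polynomial terms, pull the $n$-dependent factor out of the sum, and thereby expose a genuine scalar martingale $G^{(0)}_{n+1}=\sum_k e^{\mu t_{k+1}}\alpha_k\,\pi M_{k+1}u^\tr$ to which Lemma~\ref{lem:Almost_Sure_Rate_MDiffSum} applies (Abel summation is then used only to control the Jordan-block polynomial factors $G^{(p)}$ from the already-bounded $G^{(0)}$). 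None of this factorization survives once the drift matrices are random and vary with $j$. A crude pathwise norm bound $\|\widehat{\Phi}_{n,k}\|\le Ce^{-\widehat{\lambda}(t_n-t_k)}$ does not substitute for it: without the martingale structure you lose the concentration and cannot get the $\sqrt{\alpha_n\ln t_{n+1}}$ scale (nor the deterministic constant $C$ the lemma asserts).

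The paper avoids this by keeping the nonlinearity \emph{outside} the noise accumulator: it defines $\psi_{n+1}=\sum_{k\le n}\alpha_k\,\ones^\tr\pi M_{k+1}e^{-(t_{n+1}-t_{k+1})A}$ with deterministic weights and proves the LIL for it (Lemma~\ref{lem:psi_LIL}, via the eigenvector factorization above and the improved martingale LIL), and then bounds the remainder $\Delta_n=\ones^\tr\pi(x_n-\xS)-\psi_n$ by a pathwise recursive inequality, $\|\Delta_{n+1}\|\le(1-\lambda\alpha_n)\|\Delta_n\|+C\alpha_n(\alpha_n\|\psi_n\|+\|\psi_n\|^a)$, in which the $f_1$ term and the $\|\Delta_n\|^a$ term are absorbed using $\Delta_n\to 0$ and $a>1$; unrolling and the Type~1 / Type~$\gamma$ estimates (Lemma~\ref{lem:Upper Bd Delta_n}) show $\|\Delta_n\|$ is negligible relative to $\sqrt{\alpha_n\ln t_{n+1}}$. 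To repair your argument you would essentially have to undo Step 2 and adopt this $\psi_n$/$\Delta_n$ split (which is also how \cite{pelletier1998almost} itself proceeds), so as written the proposal has a genuine gap at its central concentration step.
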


\begin{lemma}{(Disagreement Error)}
\label{lem:Bd Q (x_n - xS)}
Let $\delta > 0.$ Then, 
\begin{equation}
    \|Qx_n\| = \cO\left(\alpha_n (\ln n)^{1 + \delta}\right) \quad \text{ a.s. on $\cE(\xS).$}
\end{equation}
\end{lemma}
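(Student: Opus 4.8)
The plan rests on the fact that $Q$ is the spectral projection of $W$ complementary to its dominant eigenvalue, so that on the ``disagreement subspace'' $\mathrm{range}(Q)$ the matrix $W$ acts as a genuine contraction, and $Qx_n$ then behaves like a geometrically smoothed version of $\alpha_n$ times a bounded input. Concretely, by \ref{a:Gossip} and Perron--Frobenius, $1$ is a simple eigenvalue of the irreducible aperiodic row-stochastic $W$, every other eigenvalue lies strictly inside the unit disk, and the associated spectral projection is $\ones^\tr\pi$ (normalized so $\pi\ones^\tr=1$). Writing $N:=W-\ones^\tr\pi$, one checks $N\ones^\tr = 0 = \pi N$, $W^k = \ones^\tr\pi + N^k$ for $k\ge1$, and---using $\pi W=\pi$, $W\ones^\tr=\ones^\tr$, $Q\ones^\tr=0$, $Q^2=Q$---the identities $QW=WQ$ and $W^kQ=QW^k=N^k$ for $k\ge1$. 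Since the spectral radius of $N$ is some $\rho_0<1$, fixing $\rho\in(\rho_0,1)$ and invoking Gelfand's formula yields a constant $C$ (also absorbing $\|Q\|$) with $\|W^kQ\|\le C\rho^k$ for all $k\ge0$. Applying $Q$ to \eqref{eqn:disSA} and using $Q\xS = Q\ones^\tr\yS = 0$, the disagreement term $y_n:=Qx_n = Q(x_n-\xS)$ obeys $y_{n+1} = Wy_n + \alpha_n Q\big(h(x_n)+M_{n+1}\big)$.

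I would then unroll this from a sample-path-dependent time $n_0$, a.s.\ finite on $\cE(\xS)$, beyond which $x_n\in\cU$ and $\|y_n\|\le1$; such $n_0$ exists because on $\cE(\xS)$ we have $x_n\to\xS$, hence $y_n = Q(x_n-\xS)\to0$. This gives
\[
  y_n = N^{\,n-n_0}x_{n_0} + \sum_{k=n_0}^{n-1}\alpha_k\,W^{\,n-1-k}\,Q\big(h(x_k)+M_{k+1}\big),
\]
and the summand is uniformly bounded for $k\ge n_0$: by \ref{a:function}, $Qh(x) = Q(B+f_2(x))$ for $x\in\cU$ (the $\ones^\tr\pi(\cdot)$ contributions vanish under $Q$), which is bounded since $f_2$ is continuous and $x_k\to\xS$; and by \ref{a:Noise}(b), $\|QM_{k+1}\|\le C\big(1+\|Q(x_k-\xS)\|\big) = C(1+\|y_k\|)\le 2C$. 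Hence $\|Q(h(x_k)+M_{k+1})\|\le C$ for $k\ge n_0$.

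It then remains to perform the geometric smoothing: $\|y_n\|\le C\rho^{\,n-n_0}\|x_{n_0}\| + C\sum_{k=n_0}^{n-1}\alpha_k\rho^{\,n-1-k}$. The first term is $\cO(\rho^n)$, hence negligible against $\alpha_n$. Splitting the convolution at $k\approx n/2$: the low-index block is $\le C n\rho^{n/2} = o(\alpha_n)$, while on the high-index block $\alpha_k\le\alpha_{\lceil n/2\rceil}\le C\alpha_n$---for $\alpha_n=\alpha_0/n$ this ratio is $\le 2$, and in the Type~$\gamma$ case it tends to $2^\gamma$ by the regular variation in \ref{a:Stepsize}---so that block is $\le C\alpha_n\sum_{j\ge0}\rho^j = \cO(\alpha_n)$. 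Thus $\|Qx_n\| = \cO(\alpha_n)$ a.s.\ on $\cE(\xS)$, which is in fact stronger than the asserted $\cO(\alpha_n(\ln n)^{1+\delta})$.

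The individual steps are routine, so the ``hard part'' is mostly organizational: (i) extracting $\|W^kQ\|\le C\rho^k$ and the commutation/projection identities cleanly from \ref{a:Gossip} (easy given Perron--Frobenius and Gelfand, but this is precisely the mechanism that makes the disagreement error $\cO(\alpha_n)$ rather than merely $o(1)$), and (ii) being scrupulous that every ``eventually'' and every constant is pathwise on $\cE(\xS)$, with the arbitrarily large pre-$n_0$ iterates absorbed by the factor $\rho^{n-n_0}$. One remark: the bound on $\|QM_{k+1}\|$ above used $y_k\to0$; if instead one wants to insulate the disagreement analysis from the convergence of $x_n$ and retain only the linear-growth bound $\|QM_{k+1}\|\le C(1+\|y_k\|)$, the weighted martingale sum $\sum_k\alpha_k W^{n-1-k}QM_{k+1}$ should be controlled via a martingale LIL/concentration inequality (in the spirit of Lemma~\ref{lem:Almost_Sure_Rate_MDiffSum}) together with a discrete Gr\"onwall argument---and that more robust route is presumably where the extra $(\ln n)^{1+\delta}$ in the statement enters.
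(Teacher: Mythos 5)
Your proof is correct, but it takes a genuinely different---and more elementary---route than the paper. The paper establishes this lemma by splitting $Qx_n = \Gamma_n + \chi_n$, with $\chi_n$ the cumulative noise defined in \eqref{e:chi_n.Defn}: it bounds $\chi_n$ by projecting onto eigenvectors of $W$ and $A$ and invoking a martingale law of iterated logarithm (Duflo, Theorem~6.4.24), which is precisely where the $(\ln n)^{1+\delta}$ factor enters, and it bounds $\Gamma_n$ by the same geometric-convolution estimate you use (Lemmas~\ref{lem: Bd chi_n} and \ref{lem:Bd.Gamma_n}). You instead exploit the fact that \ref{a:Noise}(b) is an almost-sure, pathwise bound: on $\cE(\xS)$ the input $Q\bigl(h(x_k)+M_{k+1}\bigr)$ is eventually bounded along the sample path (using $Q\ones^\tr=0$, continuity of $f_2$, and $x_k\to\xS$), so unrolling $Qx_{n+1}=WQx_n+\alpha_nQ(\cdot)$ and using $\|W^kQ\|\le C\rho^k$ with $\rho<1$ gives $\|Qx_n\|=\cO(\alpha_n)$ a.s.\ on $\cE(\xS)$, which is in fact stronger than the stated bound; under the paper's assumptions the logarithmic factor is an artifact of routing the noise through the martingale LIL. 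Your Gelfand/spectral-radius bound on $\|W^kQ\|$ also sidesteps the eigenvalue-multiplicity bookkeeping (Lemma~\ref{lem:Jordan_Decomposition}) that the paper's eigenvector-projection argument requires. What the paper's heavier machinery buys---as you yourself note in your closing remark---is robustness: the treatment of $\chi_n$ via the martingale LIL needs only conditional-moment control of the noise (plus an increment bound), so it would survive weakening \ref{a:Noise}(b) to a moment condition, whereas your shortcut leans entirely on the almost-sure bound combined with convergence on $\cE(\xS)$. The remaining steps of your argument (the pathwise random $n_0$, the split of the convolution at $n/2$, and the bounded ratio $\alpha_{\lceil n/2\rceil}/\alpha_n$ for both stepsize types) are all sound and consistent with the pathwise interpretation of the a.s.\ $\cO(\cdot)$ statement.
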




\begin{remark}
\label{r:relative rate}
Up to logarithmic factors, the rate at which the disagreement error decreases is the square of the rate at which the agreement error decreases.  Thus, the overall convergence rate is essentially dictated by the agreement component of the error.  
\end{remark}

With these two ingredients at hand, our main result is arrived at via the following short calculation. 

\begin{proof}[Proof of Theorem~\ref{thm:Main Result}]
    Observe that 
    \[
        \|x_n - \xS\| \leq \|\ones^\tr \pi(x_n - \xS)\| + \|Q x_n\|.
    \]
    
    Also, $\ln t_{n+1}$ is $O(\ln n)$ and $O(\ln \ln n)$ in the Type $\gamma$ and Type $1$ cases, respectively. The desired result is now easy to see from Lemmas~\ref{lem:Bd ones pi (x_n - xS)} and ~\ref{lem:Bd Q (x_n - xS)}.
\end{proof}

\subsection{Bound on agreement error $\|\ones^\tr \pi (x_n - \xS)\|$}
We first focus on the details of our analysis for the first ingredient, i.e., the agreement error. Let
\begin{equation}
    \label{eqn:Psi_n_Exp}
    \psi_{n + 1} := \sum_{k = 0}^n \alpha_k \ones^\tr \pi M_{k + 1} \me^{-(t_{n + 1} - t_{k + 1})A}, \quad n \geq 0,
\end{equation}
and
\begin{equation}
    \label{eqn:Delta_n_Defn}
    \Delta_n := \ones^\tr \pi (x_n - \xS) - \psi_n, \quad n \geq 0.
\end{equation}

Clearly, to prove Lemma~\ref{lem:Bd ones pi (x_n - xS)}, it suffices to obtain bounds on the rate at which $\|\psi_n\|$ and $\|\Delta_n\|$ decay. These bounds are stated below. Note that these results are generalizations of Lemmas~1 and 3 from \cite{pelletier1998almost}. Specifically, the results there focused on one-timescale stochastic approximation, ours on the other hand handles the distributed case. Furthermore, the quantities of interest here, e.g, $\psi_n, \Delta_n,$ are matrix-valued, unlike the ones in \cite{pelletier1998almost} which were vector-valued. 


\begin{lemma}
\label{lem:psi_LIL}
Let $b$ be as in \ref{a:Noise}. 
Suppose that either $\alpha$ is of Type $1$ or that $\alpha$ is of Type $\gamma$ with $\gamma > 2/b.$  Then, there exists some deterministic constant $C \geq 0$ such that
\begin{equation}
    \limsup_{n \to \infty} \left(\alpha_n \ln t_{n + 1} \right)^{-1/2} \|\psi_{n + 1}\| \leq C \quad \text{ a.s. on $\cE(\xS)$ }
\end{equation}
\end{lemma}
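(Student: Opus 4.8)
The plan is to prove Lemma~\ref{lem:psi_LIL} by showing that the matrix-valued process $\psi_{n+1}$, which is essentially a discounted sum of the martingale differences $\alpha_k \ones^\tr \pi M_{k+1}$, satisfies the claimed law of iterated logarithm. The first observation is that, since $\psi_{n+1}$ lives in $\ones^\tr\otimes\Real^d$ and the noise enters only through $\pi M_{k+1}$, it suffices to control the $\Real^d$-valued process $\sum_{k=0}^n \alpha_k (\pi M_{k+1}) \me^{-(t_{n+1}-t_{k+1})A}$; the factor $\ones^\tr$ is a fixed vector of unit operator norm, so it does not affect the rate. So the problem reduces to a genuinely one-timescale LIL for a scaled, matrix-discounted martingale-difference sum — exactly the setting of Lemma~1 in \cite{pelletier1998almost}, but with the crucial difference that we want it under the weaker hypotheses permitted by \ref{a:Noise} and \ref{a:Stepsize} (in particular, non-square-summable stepsizes).

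\textbf{Key steps, in order.} (i) Rewrite $\psi_{n+1}$ recursively: $\psi_{n+1} = \psi_n \me^{-\alpha_n A} + \alpha_n \ones^\tr \pi M_{n+1}\,\me^{-(t_{n+1}-t_{n+1})A}$ — more precisely, using $t_{n+1}-t_{k+1} = (t_n - t_{k+1}) + \alpha_n$, one gets $\psi_{n+1} = (\psi_n)\me^{-\alpha_n A}\cdot(\text{correction}) + \alpha_n \ones^\tr\pi M_{n+1}$, so that $\psi_n$ is driven by a contraction (since $-A$ is Hurwitz, $\|\me^{-tA}\|\le C\me^{-\lambda_{\min} t}$ for $t\ge 0$, up to polynomial factors if $A$ is not diagonalizable) plus fresh martingale noise. (ii) Apply the improved martingale-difference concentration inequality, Lemma~\ref{lem:Almost_Sure_Rate_MDiffSum}, to the relevant partial sums; the moment bound \ref{a:Noise}.(d) with exponent $b>2$ and, in the Type~$\gamma$ case, the hypothesis $\gamma > 2/b$, are precisely what make that lemma applicable — this is where non-square-summability is absorbed. (iii) Combine the contraction estimate from (i) with the concentration bound from (ii), via a standard "blocking'' or Abel-summation argument: split the sum at a suitable index, bound the tail ($k$ close to $n$) using the quadratic-variation/LIL control of the most recent increments, and bound the head using the exponential decay $\me^{-\lambda_{\min}(t_{n+1}-t_{k+1})}$, which kills old noise contributions since $t_{n+1}-t_{k+1}\to\infty$. (iv) Track the normalization: show that $\sum_{k} \alpha_k^2 \me^{-2(t_{n+1}-t_{k+1})\lambda_{\min}} \asymp \alpha_n / (2\lambda_{\min})$ up to constants (a Laplace-type / regular-variation estimate using \ref{a:Stepsize} — this is where the two stepsize types are handled separately, Type~1 giving the $\ln\ln$ and Type~$\gamma$ the $\ln t_{n+1}\asymp \ln n$ iterated-log scale), so that dividing by $\sqrt{\alpha_n \ln t_{n+1}}$ yields a finite $\limsup$. (v) Conclude using positive-definiteness of $A$ (i.e.\ $\lambda_{\min}>0$, guaranteed by $yAy^\tr>0$ in \ref{a:function}, which in the Type~1 case also forces $\alpha_0 > 1/(2\lambda_{\min})$ to be meaningful) that the constant $C$ is finite and deterministic.

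\textbf{Main obstacle.} The hardest part will be step (iii)--(iv): obtaining the sharp LIL normalization for the \emph{discounted} martingale sum under non-square-summable stepsizes. The difficulty is twofold. First, one must show that the effective "variance'' of $\psi_{n+1}$ is of order $\alpha_n$ (not $t_n$ or $\sum\alpha_k^2$), which requires carefully exploiting the exponential discounting to see that only the last $\Theta(1/\alpha_n)$ increments matter, and then applying an iterated-logarithm bound on a window of increments whose length itself grows with $n$. Second, the regular-variation hypothesis in \ref{a:Stepsize} (Type~$\gamma$) must be used to control ratios like $\alpha_k/\alpha_n$ and $t_{n+1}-t_{k+1}$ uniformly over the relevant range of $k$; this is delicate because $\gamma$ can be as small as just above $2/b$. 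A secondary technical nuisance is the possible non-normality of $A$: if $A$ is only known to satisfy $yAy^\tr>0$ and not to be symmetric, then $\|\me^{-tA}\|$ may carry a polynomial-in-$t$ prefactor, which must be shown not to spoil the rate — this is absorbed because $\alpha_n (\ln\cdot)$-type losses dominate any polynomial factor that is itself exponentially damped. Everything else — the recursion, the reduction from matrices to $\Real^d$, and the final assembly — should be routine once Lemma~\ref{lem:Almost_Sure_Rate_MDiffSum} is in hand.
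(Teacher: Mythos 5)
Your plan correctly identifies the main tool (Lemma~\ref{lem:Almost_Sure_Rate_MDiffSum}), the role of $\gamma>2/b$ and \ref{a:Noise}(d), and the right heuristic that the effective variance scale behaves like $\alpha_n e^{2\lambda t_{n+1}}$ (equivalently, order $\alpha_n$ after discounting). But there is a genuine gap at the crux, and it is exactly the point you flag as the ``main obstacle'' without resolving it: Lemma~\ref{lem:Almost_Sure_Rate_MDiffSum} applies to a true martingale $U_{n+1}=\sum_{k\le n}\phi_k\epsilon_{k+1}$ whose weights $\phi_k$ are $\cF_k$-adapted and do \emph{not} depend on $n$, whereas $\psi_{n+1}$ carries the $n$-dependent weights $e^{-(t_{n+1}-t_{k+1})A}$ and is therefore not a martingale. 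Your step (iii) proposes to bridge this with a blocking argument over moving windows of $\Theta(1/\alpha_n)$ increments, applying ``an iterated-logarithm bound on a window of increments whose length itself grows with $n$'' — but no off-the-shelf LIL gives this, and making it work would require a fresh maximal-inequality/Borel--Cantelli argument over blocks, i.e.\ essentially reproving a concentration result rather than invoking Lemma~\ref{lem:Almost_Sure_Rate_MDiffSum}. As written, step (ii) (``apply the lemma to the relevant partial sums'') has no well-defined martingale to apply it to.

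The paper closes this gap differently and much more cheaply: project $\psi_{n+1}$ onto a generalized right eigenvector $w^\tr$ of $A$ with eigenvalue $\mu$, and use $e^{-(t_{n+1}-t_{k+1})\mu}=e^{-\mu t_{n+1}}\,e^{\mu t_{k+1}}$ to pull the $n$-dependence out of the sum. The inner sum $G^{(0)}_{n+1}=\sum_{k\le n}\alpha_k e^{\mu t_{k+1}}\pi M_{k+1}u^\tr$ is then a genuine martingale with predictable weights, and Lemma~\ref{lem:Almost_Sure_Rate_MDiffSum} applies directly with $T_n=\alpha_n e^{\lambda t_{n+1}}$, $\tau_n\sim C\alpha_n e^{2\lambda t_{n+1}}\to\infty$, and $\LL(\tau_n)\sim\ln t_{n+1}$ — this is precisely how the $\sqrt{\alpha_n\ln t_{n+1}}$ scale emerges after multiplying back by $e^{-\lambda t_{n+1}}$ (with $\zeta=1/(2\alpha_0)$ entering in the Type~1 case through $\tau_n\sim[\alpha_0/(2\lambda\alpha_0-1)]\alpha_n e^{2\lambda t_{n+1}}$, which is where $\alpha_0>1/(2\lambda_{\min})$ is genuinely used). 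The polynomial factors $(t_{n+1}-t_{k+1})^p$ arising from nontrivial Jordan blocks are not merely ``absorbed by exponential damping''; they are handled by an Abel-summation reduction of $G^{(p)}$ to the already-controlled $G^{(0)}$, following \cite{pelletier1998almost}. If you reorganize your proof around this factorization, your steps (ii), (iv), and (v) go through essentially as you describe; without it, step (iii) is a missing proof, not a routine assembly.
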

%

\begin{lemma}
\label{lem:Upper Bd Delta_n}
Suppose \ref{a:function}, \ref{a:Stepsize} and \ref{a:Noise} hold. Then, for any $\lambda \in (0, \lambda_{\min}),$
%
%
\begin{equation}
\label{e:Des.Delta.Res}
       \|\Delta_n\| = \cO\left(\max\left\{e^{-\lambda \sum_{k = 0}^n \alpha_k}, \sum_{j = 0}^n \alpha_j e^{-\lambda \sum_{k = j + 1}^n \alpha_k}[\alpha_j \|\psi_j\| +  \|\psi_{j}\|^a] \right\}\right)
\end{equation}
a.s. on $\cE(\xS),$ where $a$ is as in \eqref{e:one.pi.nonlinear.Bd}. Furthermore, 
\begin{enumerate}
    \item If $\alpha$ is of Type $1,$ then
    \[
        \|\Delta_n\| = \cO \left( \max \{n^{-\lambda \alpha_0}; n^{-\frac{a}{2}}; n^{-1.5}\} (\ln n)^{\frac{a}{2} + 1}\right)  \quad \text{ a.s. on $\cE(\xS).$ }
    \]

    \item If $\alpha$ is of Type $\gamma$ with $2/b < \gamma < 1,$ then
    \[
        \|\Delta_n\| = \cO\left( \alpha_n (\alpha_n \ln t_{n+1})^{1/2} + (\alpha_n \ln t_{n+1})^{a/2} \right) \quad \text{ a.s. on $\cE(\xS).$ }
    \]
\end{enumerate}
\end{lemma}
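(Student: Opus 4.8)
The plan is to derive a recursion for $\Delta_n$ by subtracting the "linearized, noise-free" dynamics from the actual update, then unroll it using a discrete Gronwall-type estimate, and finally specialize the resulting bound to the two stepsize regimes using the already-available decay rate for $\|\psi_n\|$ from Lemma~\ref{lem:psi_LIL}. First I would multiply \eqref{eqn:disSA} on the left by $\ones^\tr\pi$; since $\pi W=\pi$, the gossip term collapses to $\ones^\tr\pi x_n$, so $\ones^\tr\pi(x_{n+1}-\xS)=\ones^\tr\pi(x_n-\xS)+\alpha_n[\ones^\tr\pi h(x_n)+\ones^\tr\pi M_{n+1}]$. Using \ref{a:function}, on $\cU$ we have $\ones^\tr\pi h(x_n)=-\ones^\tr\pi(x_n-\xS)A+\ones^\tr\pi f_1(x_n)$ (the $Q$-term is killed since $\pi Q=0$). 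On $\cE(\xS)$ the iterates eventually lie in $\cU$, so this holds for all large $n$. Writing $u_n:=\ones^\tr\pi(x_n-\xS)$, we get $u_{n+1}=u_n(\Id-\alpha_n A)+\alpha_n\ones^\tr\pi f_1(x_n)+\alpha_n\ones^\tr\pi M_{n+1}$. Comparing with the definition \eqref{eqn:Psi_n_Exp}, $\psi_n$ is exactly the "pure martingale part" solving $\psi_{n+1}=\psi_n(\Id-\alpha_nA)+\alpha_n\ones^\tr\pi M_{n+1}$ up to the discrepancy between the discrete product $\prod(\Id-\alpha_kA)$ and the exponential $\me^{-(t_{n+1}-t_{k+1})A}$ — so I would first record a lemma (or cite the standard estimate, as in \cite{pelletier1998almost}) that this discrepancy is lower order and can be absorbed. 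Then $\Delta_n=u_n-\psi_n$ satisfies
\[
\Delta_{n+1}=\Delta_n(\Id-\alpha_nA)+\alpha_n\ones^\tr\pi f_1(x_n)+(\text{product-vs-exponential correction}).
\]

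Next I would bound $\|\ones^\tr\pi f_1(x_n)\|$. By \eqref{e:one.pi.nonlinear.Bd} it is $\cO(\|u_n\|^a)=\cO(\|\Delta_n+\psi_n\|^a)=\cO(\|\Delta_n\|^a+\|\psi_n\|^a)$. This is where a bootstrap is needed: a priori $\|\Delta_n\|$ could dominate, but since $a>1$ and $\Delta_n\to 0$ on $\cE(\xS)$ (which follows from convergence of $x_n$ and $\psi_n\to0$), for large $n$ we have $\|\Delta_n\|^a\le\|\Delta_n\|$, so the term $\alpha_n\|\Delta_n\|^a$ can be moved into the contraction factor, effectively replacing $(\Id-\alpha_nA)$ by something still contracting at rate $\lambda$ for any $\lambda<\lambda_{\min}$. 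Unrolling the recursion then yields
\[
\|\Delta_n\|\le C\,\me^{-\lambda\sum_{k=0}^n\alpha_k}\|\Delta_0\|+C\sum_{j=0}^n\alpha_j\,\me^{-\lambda\sum_{k=j+1}^n\alpha_k}\bigl[\alpha_j\|\psi_j\|+\|\psi_j\|^a\bigr],
\]
which is \eqref{e:Des.Delta.Res} (the $\alpha_j\|\psi_j\|$ term coming from the product-vs-exponential correction, the $\|\psi_j\|^a$ from $f_1$). The contraction estimate $\prod_{k=j+1}^n(\Id-\alpha_kA)$ bounded by $C\me^{-\lambda\sum_{k=j+1}^n\alpha_k}$ in operator norm is routine once $\lambda_{\min}>0$.

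Finally I would plug in the rate from Lemma~\ref{lem:psi_LIL}, namely $\|\psi_j\|=\cO(\sqrt{\alpha_j\ln t_{j+1}})$, and evaluate the convolution sum $\sum_j\alpha_j\me^{-\lambda(t_{n+1}-t_{j+1})}g_j$ for $g_j=\alpha_j\|\psi_j\|+\|\psi_j\|^a$. For Type~$\gamma$ the standard fact (same as used throughout the regularly-varying-stepsize literature, cf. Lemma~1 of \cite{pelletier1998almost}) is that such a convolution is asymptotically $\sim g_n/\lambda$ up to constants when $g_n$ decays polynomially and $\alpha_n$ is regularly varying, giving $\|\Delta_n\|=\cO(\alpha_n(\alpha_n\ln t_{n+1})^{1/2}+(\alpha_n\ln t_{n+1})^{a/2})$; the exponential head term $\me^{-\lambda t_{n+1}}$ decays faster than any power and is negligible. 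For Type~1, $t_{n+1}\sim\alpha_0\ln n$ so $\me^{-\lambda t_{n+1}}=n^{-\lambda\alpha_0}$, $\alpha_n\|\psi_n\|=\cO(n^{-3/2}\sqrt{\ln\ln n})$, and $\|\psi_n\|^a=\cO(n^{-a/2}(\ln\ln n)^{a/2})$; here one must be slightly careful because with $\alpha_n=\alpha_0/n$ the convolution can pick up an extra logarithmic factor when the exponents nearly coincide, which is exactly why the stated bound carries $(\ln n)^{a/2+1}$ and the max over $n^{-\lambda\alpha_0}$, $n^{-a/2}$, $n^{-3/2}$. I expect the main obstacle to be precisely this bookkeeping in the Type~1 convolution sums — getting the logarithmic exponents right and justifying that the $\|\Delta_n\|^a$ self-referential term is genuinely absorbable (i.e., the bootstrap does not require circular reasoning) — whereas the Type~$\gamma$ case should follow cleanly from the regularly-varying convolution asymptotics.
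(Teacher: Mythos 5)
Your proposal follows essentially the same route as the paper: derive the recursion $\Delta_{n+1}=\Delta_n(\Id-\alpha_nA)+\alpha_n\ones^\tr\pi f_1(x_n)+\cO(\alpha_n^2)\psi_n$ (the last term being exactly the exponential-vs-product correction, obtained in the paper by Taylor-expanding $\me^{-\alpha_nA}$), absorb the self-referential $\|\Delta_n\|^a$ term using $\Delta_n\to 0$ and $a>1$ into the contraction rate $\lambda<\lambda_{\min}$, unroll to get \eqref{e:Des.Delta.Res}, and then evaluate the convolution with the rate from Lemma~\ref{lem:psi_LIL}, via \cite[Lemma~4]{pelletier1998almost} in the Type $\gamma$ case and direct power-of-$n$ bookkeeping (including the extra $\ln n$ when exponents coincide) in the Type 1 case. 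This matches the paper's argument in both structure and detail, so the proposal is correct.
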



We refer the reader to the Appendix for the proofs of Lemmas~\ref{lem:psi_LIL} and \ref{lem:Upper Bd Delta_n}. However, there is one point which we would like to emphasize here. That is, $\psi_n$ is a sum of scaled (matrix-valued) martingale differences. And, to derive its decay rate, we use the following law of iterated logarithm. 

Let $\LL(x) = \ln \ln(x).$
\begin{lemma}
\label{lem:Almost_Sure_Rate_MDiffSum}
For $n \geq 0,$ let $U_{n + 1}  = \sum_{k = 0}^{n} \phi_k \epsilon_{k + 1},$ where $\{\epsilon_n\}$ is a real-valued martingale difference sequence adapted to a filtration $\{\cF_n\},$ and $\{\phi_n\}$ is a sequence of real-valued scalars, again adapted to $\{\cF_n\}.$ 

Let $\{T_n\},$ also adapted to $\{\cF_n\},$ be such that, for $n \geq 0,$ $|\phi_n| \leq T_n$ a.s. and $\tau_n := \sum_{k = 0}^n T_k^2$  satisfies $\lim_{n \to \infty} \tau_{n} = \infty$ a.s.  Further, assume  $\sup_{n \geq 0}\ExP[\epsilon^2_{n + 1}|\cF_n] \leq \sigma^2$ a.s. for some constant $\sigma^2.$ Also, let $\beta > 0$ be such that $\sum_{n} T_n^{2 + 2 \beta} \tau_n^{-1 - \beta} [\LL(\tau_n)]^\beta < \infty$ and $\sup_{n \geq 0} \ExP \left[|\epsilon_{n + 1}|^{2 + 2 \beta} \middle| \cF_n \right] < \infty$ a.s. 
Then,
\begin{equation}
    \limsup [2\tau_n  \LL {\tau_n}]^{-1/2} |U_{n + 1}| \leq \sigma \quad \text{ a.s.}
\end{equation}
\end{lemma}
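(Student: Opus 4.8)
The plan is to reduce to a one-sided statement, split $U$ into a finely truncated martingale that carries the sharp constant $\sigma$ and a ``large-increment'' martingale that is negligible on the iterated-logarithm scale, and then treat the two by, respectively, a Freedman-type exponential inequality along a geometric subsequence and a martingale strong law. Since $(-\epsilon_{n+1})$ satisfies the same hypotheses, it is enough to prove $\limsup\,[2\tau_n\LL\tau_n]^{-1/2}U_{n+1}\le\sigma$ a.s.\ and apply the same to $-U$. Fix $\theta>1$ and $\delta>0$ (sent to $1$ and $0$ at the end) and a small $\eta>0$ chosen later in terms of $\delta$; discard the finitely many $k$ with $\tau_k\le e^{e}$, and for the rest put $b_k:=\eta(\tau_k/\LL\tau_k)^{1/2}$, which is $\cF_k$-measurable and nondecreasing in $k$ (because $\tau\mapsto\tau/\LL\tau$ is eventually increasing). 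Write $\phi_k\epsilon_{k+1}=\tilde\xi_{k+1}+\hat r_{k+1}$ with
\[
\tilde\xi_{k+1}:=\phi_k\epsilon_{k+1}\indc\{|\phi_k\epsilon_{k+1}|\le b_k\}-\E[\,\cdot\mid\cF_k],\qquad \hat r_{k+1}:=\phi_k\epsilon_{k+1}\indc\{|\phi_k\epsilon_{k+1}|> b_k\}-\E[\,\cdot\mid\cF_k],
\]
both martingale differences; set $\tilde U_{n+1}=\sum_{k\le n}\tilde\xi_{k+1}$, $\hat R_{n+1}=\sum_{k\le n}\hat r_{k+1}$, so $U_{n+1}=\tilde U_{n+1}+\hat R_{n+1}$, with $|\tilde\xi_{k+1}|\le 2b_k$ and (using $|\phi_k|\le T_k$, $\E[\epsilon_{k+1}^2|\cF_k]\le\sigma^2$) predictable quadratic variation $\langle\tilde U\rangle_{n+1}\le\sigma^2\tau_n$.

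For the remainder $\hat R$, using $x^2\indc\{x>b\}\le b^{-2\beta}x^{2+2\beta}$, $|\phi_k|\le T_k$, and $\sup_k\E[|\epsilon_{k+1}|^{2+2\beta}|\cF_k]<\infty$ a.s., I would bound $\E[\hat r_{k+1}^2|\cF_k]\le C\,b_k^{-2\beta}T_k^{2+2\beta}=C\eta^{-2\beta}T_k^{2+2\beta}\tau_k^{-\beta}(\LL\tau_k)^{\beta}$, whence $\sum_k\E[\hat r_{k+1}^2|\cF_k]/(\tau_k\LL\tau_k)\le C\eta^{-2\beta}\sum_k T_k^{2+2\beta}\tau_k^{-(1+\beta)}(\LL\tau_k)^{\beta-1}$, which is finite a.s.\ by the summability hypothesis (since $(\LL\tau_k)^{\beta-1}\le(\LL\tau_k)^{\beta}$). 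Hence $\sum_k\hat r_{k+1}/(\tau_k\LL\tau_k)^{1/2}$ is an a.s.-convergent martingale, and Kronecker's lemma gives $\hat R_{n+1}=o((\tau_n\LL\tau_n)^{1/2})$ a.s. The same summability also forces $T_k^{2}/\tau_k\to0$, which is needed below.

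For the main term, $\tilde U$ is a bounded-increment martingale with $\langle\tilde U\rangle_n\le\sigma^2\tau_n$. Take the geometric subsequence $n_k:=\min\{n:\tau_n\ge\theta^k\}$; from $T_n^2/\tau_n\to0$ one gets $\tau_{n_k}/\theta^k\to1$, and since $b_m$ is nondecreasing, $\max_{m<n_k}2b_m\le(2+o(1))\eta(\theta^k/\LL\theta^k)^{1/2}$. Applying Freedman's maximal, variance-constrained exponential inequality to $\tilde U$ up to $n_k$ at level $x_k:=(1+\delta)\sigma(2\theta^k\LL\theta^k)^{1/2}$, with variance budget $\sigma^2\tau_{n_k}\sim\sigma^2\theta^k$ and increment bound $2b_{n_k-1}$, and choosing $\eta$ small enough (in terms of $\delta$) that the cross term is at most $\delta$ times the variance term, I obtain for all large $k$
\[
\Pr\Bigl(\max_{n<n_k}\tilde U_{n+1}\ge x_k\Bigr)\le\exp\bigl(-x_k^2/[2(\sigma^2\tau_{n_k}+2b_{n_k-1}x_k)]\bigr)\le\exp\bigl(-(1+\delta/2)\LL\theta^k\bigr).
\]
As $\LL\theta^k=\ln(k\ln\theta)$, the right side is summable in $k$, so Borel--Cantelli gives $\max_{n<n_k}\tilde U_{n+1}<x_k$ eventually; filling the gaps ($n_k\le n<n_{k+1}$ with $x_{k+1}$ and $\tau_n\ge\theta^k$, using $\LL\theta^k\sim\LL\tau_n$) yields $\limsup_n[2\tau_n\LL\tau_n]^{-1/2}\tilde U_{n+1}\le(1+\delta)\sqrt\theta\,\sigma$. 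Adding the negligible $\hat R$, letting $\delta\downarrow0$, $\theta\downarrow1$, and using the symmetric statement for $-U$ completes the proof.

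The main obstacle — and the point where this improves on the proof of \cite[Corollary~6.4.25]{duflo2013random} — is the calibration of the truncation. For Freedman's inequality to deliver the exact constant $\sigma$, the increments must be cut at the critical order $\eta(\tau_k/\LL\tau_k)^{1/2}$; but under the weak summability hypothesis stated here the over-threshold increments are \emph{not} eventually zero (a naive Borel--Cantelli truncation would instead demand the stronger $\sum_k T_k^{2+2\beta}\tau_k^{-(1+\beta)}(\LL\tau_k)^{1+\beta}<\infty$). The resolution is to keep the over-threshold part as the martingale $\hat R$ and control it via its conditional second moments, which carry one fewer factor of $\LL\tau_k$ and are thus summable after dividing by $\tau_k\LL\tau_k$; martingale convergence plus Kronecker then removes $\hat R$ from the iterated-logarithm scale. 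Two further technical points require care: Freedman's inequality must be used in its maximal, variance-constrained form so that a single Borel--Cantelli pass along $\{n_k\}$ costs only the recoverable factor $\sqrt\theta$; and one must verify that single jumps of $\tau$ at the $n_k$ are asymptotically negligible, which follows again from $T_k^2/\tau_k\to0$.
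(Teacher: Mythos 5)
Your proposal is correct in substance, but it takes a genuinely different route from the paper. The paper's proof is a minimal modification of the proof of \cite[Corollary~6.4.25]{duflo2013random}: it keeps Duflo's entire truncation machinery and the underlying exponential LIL (\cite[Theorem~6.4.24]{duflo2013random}) intact, and only changes the choice of the adapted sequence from $C_n^2 = [\LL(s_n^2)]^{1 - 1/\beta}$ to $C_n^2 = [\LL(s_n^2)]^{-1/\beta}$; then $C_n \to 0$ for every $\beta > 0$, and the required summability condition becomes exactly the hypothesis $\sum_n T_n^{2+2\beta}\tau_n^{-(1+\beta)}[\LL(\tau_n)]^{\beta} < \infty$. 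You instead rebuild the result from scratch: truncate at the critical scale $\eta\sqrt{\tau_k/\LL\tau_k}$ with a fixed small $\eta$, kill the over-threshold martingale via conditional second moments, a.s. martingale convergence and Kronecker (note your bound in fact uses only the weaker sum with $(\LL\tau_k)^{\beta-1}$, plus $T_k^2/\tau_k \to 0$, so your argument even slightly relaxes the hypothesis), and recover the sharp constant $\sigma$ from Freedman's inequality along the geometric subsequence $n_k = \min\{n : \tau_n \ge \theta^k\}$. The trade-off: the paper's proof is a few lines and makes the improvement over Duflo transparent, but is unreadable without the book; yours is self-contained, exposes precisely where each hypothesis enters, and isolates the truncation calibration as the real issue. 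One point to tighten in a write-up: since $\tau$, $b_k$ and $n_k$ are random, Freedman must be applied to the stopped martingale $\tilde U_{\cdot \wedge n_k}$ (or in its event form $\{\exists n: \tilde U_{n+1} \ge x_k,\ \langle \tilde U\rangle_{n+1} \le \sigma^2\theta^k\}$), using that $n_k$ is a stopping time, that $\{m < n_k\} \in \cF_m$, and that $\tau_m < \theta^k$ for $m < n_k$ gives the deterministic increment bound $2\eta(\theta^k/\LL\theta^k)^{1/2}$ and variance budget $\sigma^2\theta^k$; your sketch contains all these ingredients, so this is a matter of phrasing rather than a gap.
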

\begin{remark}
The condition $\sum_{n} T_n^{2 + 2 \beta} \tau_n^{-1 - \beta} [\LL(\tau_n)]^\beta < \infty$ differs from the one in \cite[Corollary~6.4.25]{duflo2013random}; in that, it includes the additional term $[\LL(\tau_n)]^\beta.$ The impact of this is that we no longer require $\beta$ to be in $(0, 1)$ as was the case in \cite[Corollary~6.4.25]{duflo2013random}. Instead, $\beta$ can now take any positive value. This is precisely what allows Theorem~\ref{thm:Main Result} to be applicable even when the stepsizes are non-square summable.
\end{remark}

\begin{remark}
The above result goes through even if we have $\limsup_{n \to \infty} \ExP[\epsilon^2_{n + 1}|\cF_n] \leq \sigma^2$ instead of $\sup_{n \geq 0} \ExP[\epsilon^2_{n + 1}|\cF_n] \leq \sigma^2;$ cf. \cite[Result~1]{pelletier1998almost}.
\end{remark}

We now present the proof of \ref{lem:Bd ones pi (x_n - xS)} which is a direct consequence of Lemmas~\ref{lem:psi_LIL} and \ref{lem:Upper Bd Delta_n}.

\begin{proof}[Proof of Lemma~\ref{lem:Bd ones pi (x_n - xS)}]
    From \eqref{eqn:Delta_n_Defn}, observe that
    \[
        \|\ones^\tr \pi(x_n - \xS)\| \leq \|\psi_n\| + \|\Delta_n\| .
    \]
    First consider the case where $\alpha$ is of Type $\gamma.$ From Lemmas~\ref{lem:psi_LIL}  and \ref{lem:Upper Bd Delta_n}, we have
    \begin{align*}
        \limsup_{n \to \infty} \frac{\|\ones^\tr \pi(x_n - \xS)\|}{\left(\alpha_n \ln t_{n + 1} \right)^{1/2}} & \leq  C + \limsup_{n \to \infty} \cO\left( \alpha_n + \left(\alpha_n \ln t_{n+1}\right)^{(a-1)/2} \right) = C,\\
        \intertext{where the last display holds because $\lim_{n \rightarrow \infty}\alpha_n = 0$, $\lim_{n \rightarrow \infty} \alpha_n \ln t_{n+1} = 0,$ and $a>1$.}
        \intertext{Next consider the case where $\alpha$ is of Type 1. Again, from Lemmas~\ref{lem:psi_LIL}  and \ref{lem:Upper Bd Delta_n}, we get}
        \limsup_{n \to \infty} \frac{\|\ones^\tr \pi(x_n - \xS)\|}{\left(\alpha_n \ln t_{n + 1} \right)^{1/2}} & \leq C + \limsup_{n \to \infty} \cO\left(\frac{ \max (n^{-\lambda \alpha_0}; n^{-\frac{a}{2}}; n^{-1.5}) (\ln n)^{1 + a/2}}{(n^{-1} \ln n)^{1/2}}\right) = C,
    \end{align*}
    where the last display holds because $\lambda \alpha_0 > \frac{1}{2}$ and $a>1$.
    
    The desired result now follows. 
\end{proof}

\subsection{Bound on disagreement error $\|Qx_n\|$}
We now turn to the detailed analysis of the disagreement component of the error. Let
\begin{equation}
    \label{e:chi_n.Defn}
    \chi_{n + 1} := \sum_{j = 0}^n \alpha_j W^{n - j} Q M_{j + 1}e^{-(t_{n + 1} - t_{j + 1})A}, \quad n \geq -1,
\end{equation}
and
\begin{equation}
\label{e:Defn.Gamma_n}
    \Gamma_n: = Q x_n - \chi_n, \quad n \geq 0.
\end{equation}

Note that $\chi_{n}$ represents the cumulative noise in $Qx_{n}.$ It is also easy to see that 
\begin{equation}
\label{eqn:Chin Rel}
    \chi_{n + 1} = W \chi_n e^{-\alpha_n A} + \alpha_n Q M_{n + 1}, \quad n \geq 0.
\end{equation}

We now state our bounds for $\|\chi_n\|$ and $\|\Gamma_n\|.$ Note that $\chi_n$ and $\Gamma_n$ are peculiar to the DSA setup and do not have analogues in the one-timescale analysis.

\begin{lemma}\label{lem: Bd chi_n}
Let $\delta > 0.$ Then, 
\begin{equation}
    \|\chi_{n + 1}\| = \cO\left(\alpha_n (\ln n)^{1 + \delta}\right) \quad \text{ a.s. on $\cE(\xS)$}
\end{equation}
\end{lemma}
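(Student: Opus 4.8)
The plan is to estimate the closed form \eqref{e:chi_n.Defn} for $\chi_{n+1}$ term by term, exploiting that $\chi_n$ lives in the ``disagreement'' subspace — on which the gossip matrix contracts geometrically — together with the fact that, on $\cE(\xS)$, the noise increments $QM_{j+1}$ are pathwise bounded. The factor $e^{-(t_{n+1}-t_{j+1})A}$ will only be bounded above, not exploited.

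First I would collect the algebraic identities behind the decomposition \eqref{e:Q.defn}. From $\pi W = \pi$, $W\ones^\tr = \ones^\tr$ and $\pi\ones^\tr = 1$ (all consequences of \ref{a:Gossip}) one gets $WQ = QW = W - \ones^\tr\pi =: \tilde W$, that $Q^2 = Q$, and therefore $W^{k}Q = \tilde W^{\,k}$ for every $k\ge 0$. Since $\ones^\tr\pi$ is precisely the spectral projection of $W$ onto its (simple) Perron eigenvalue $1$, irreducibility and aperiodicity force every eigenvalue of $\tilde W$ to lie strictly inside the unit disc; hence, fixing any $\rho$ strictly between the spectral radius of $\tilde W$ and $1$, there is a constant $C$ with $\|W^{k}Q\| = \|\tilde W^{\,k}\| \le C\rho^{k}$ for all $k$ (any polynomial-in-$k$ prefactor from Jordan blocks of $\tilde W$ being absorbed into $\rho$). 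Likewise, $A$ has all its eigenvalues with positive real part by \ref{a:function}, so $\|e^{-tA}\| \le Ce^{-\lambda t} \le C$ for all $t\ge 0$ and any fixed $\lambda\in(0,\lambda_{\min})$.

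Next, on $\cE(\xS)$ we have $x_n\to\xS$, and since $Q\xS = Q\ones^\tr\yS = 0$ this gives $Qx_n = Q(x_n-\xS)\to 0$, so $K := \sup_n\|Qx_n\|$ is finite a.s.\ on $\cE(\xS)$; then \ref{a:Noise}(b) yields $\|QM_{j+1}\| \le C(1+K) =: K'$ a.s.\ on $\cE(\xS)$, uniformly in $j$. Inserting the three bounds into \eqref{e:chi_n.Defn},
\[
\|\chi_{n+1}\| \;\le\; C K' \sum_{j=0}^{n}\alpha_j\,\rho^{\,n-j}\qquad\text{a.s.\ on }\cE(\xS).
\]
It then remains to check $\sum_{j=0}^{n}\alpha_j\rho^{\,n-j} = \cO(\alpha_n)$. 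Splitting at $j=\lceil n/2\rceil$: for $j\ge\lceil n/2\rceil$, monotonicity of $\alpha$ plus the regularity in \ref{a:Stepsize} (which gives $\alpha_{\lceil n/2\rceil}=\cO(\alpha_n)$ in both the Type~$1$ and Type~$\gamma$ cases) bounds that piece by $C\alpha_n\sum_{k\ge 0}\rho^{k}$; for $j<\lceil n/2\rceil$, the factor $\rho^{\,n-j}\le\rho^{n/2}$ overwhelms the at-most-polynomial growth of $\sum_j\alpha_j\le t_n$, making that piece $o(\alpha_n)$. This yields $\|\chi_{n+1}\| = \cO(\alpha_n)$ a.s.\ on $\cE(\xS)$ — sharper than the asserted bound, which then follows a fortiori.

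I expect the main obstacle to be the uniform geometric estimate $\|W^{k}Q\|\le C\rho^{k}$ in the operator (spectral) norm used throughout. This is exactly where the nonstandard choice $Q = \Id - \ones^\tr\pi$ (rather than $\Id - \ones^\tr\ones/m$) earns its keep: $WQ = QW$ makes $\tilde W$ the restriction of $W$ to the disagreement subspace, whose spectral radius is $<1$ with no recourse to double stochasticity or normality of $W$. The only further care needed — absorbing the polynomial factors coming from non-diagonalizability of $W$ (and of $A$) — costs merely an arbitrarily small enlargement of $\rho$ and shrinkage of $\lambda$, after which the argument is a routine geometric-series estimate.
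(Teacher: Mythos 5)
Your proposal is correct, and it takes a genuinely different route from the paper. The paper's proof projects $\chi_{n+1}$ onto left eigenvectors $u$ of $W$ and right eigenvectors $v^\tr$ of $A$, recognizes $u\chi_{n+1}v^\tr$ (after pulling out $r^n e^{-(\lambda+\iota\sigma)t_{n+1}}$) as a scaled complex-valued martingale, and applies a law-of-iterated-logarithm type inequality \cite[Theorem~6.4.24]{duflo2013random} to its real and imaginary parts; the choice of the dominating sequence $s_n$ there is what produces the $(\ln n)^{1+\delta}$ factor in the statement of Lemma~\ref{lem: Bd chi_n}. You bypass the martingale machinery entirely: using $Q^2=Q$, $WQ=QW$, the Perron--Frobenius consequence $\|W^kQ\|\leq C\rho^k$ with $\rho<1$ under \ref{a:Gossip}, the uniform bound $\|e^{-tA}\|\leq C$ (since $-A$ is Hurwitz by \ref{a:function}), and the pathwise bound $\sup_j\|QM_{j+1}\|<\infty$ a.s.\ on $\cE(\xS)$ from \ref{a:Noise}(b) together with $x_n\to\xS$ and $Q\xS=0$, you reduce everything to the discrete-convolution estimate $\sum_{j=0}^n\alpha_j\rho^{n-j}=\cO(\alpha_n)$ — an estimate the paper itself invokes in the proof of Lemma~\ref{lem:Bd.Gamma_n}, and your split at $j=\lceil n/2\rceil$ justifies it cleanly for both stepsize types. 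Notably, the paper's martingale argument also relies on the pathwise increment bound from \ref{a:Noise}(b) (to control $|R_{n+1}-R_n|$ and $L_n$), so under the stated assumptions your elementary argument uses nothing more, is shorter, and in fact yields the sharper conclusion $\|\chi_{n+1}\|=\cO(\alpha_n)$ a.s.\ on $\cE(\xS)$ (with a pathwise-random implied constant, which is all the lemma and its downstream uses require); the asserted $\cO\!\left(\alpha_n(\ln n)^{1+\delta}\right)$ then follows a fortiori, and Lemma~\ref{lem:Bd Q (x_n - xS)} would sharpen correspondingly. What the paper's route would buy is robustness if \ref{a:Noise}(b) were relaxed to a conditional-moment condition, in which case pathwise boundedness of $QM_{n+1}$ — the linchpin of your argument — would no longer be available.
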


%
\begin{lemma}
\label{lem:Bd.Gamma_n}
Almost surely on $\cE(\xS),$
\begin{equation}
    \|\Gamma_{n + 1}\| = \cO\left(\alpha_n\right).
\end{equation}
\end{lemma}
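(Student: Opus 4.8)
The plan is to derive a recursion for $\Gamma_n = Qx_n - \chi_n$ directly from the update rule \eqref{eqn:disSA} and the noise recursion \eqref{eqn:Chin Rel}, and then solve that recursion using the geometric contraction of $W$ on the range of $Q$ together with the already-established bound $\|\chi_n\| = \cO(\alpha_{n-1}(\ln n)^{1+\delta})$ from Lemma~\ref{lem: Bd chi_n}. First I would left-multiply \eqref{eqn:disSA} by $Q$. Since $QW = W - \ones^\tr\pi W = W - \ones^\tr \pi = WQ$ (using $\pi W = \pi$ and $W\ones^\tr = \ones^\tr$ because $W$ is row stochastic), we get $Qx_{n+1} = WQx_n + \alpha_n[Qh(x_n) + QM_{n+1}]$. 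On the event $\cE(\xS)$, eventually $x_n \in \cU$, so by \ref{a:function} we have $Qh(x_n) = Q(B + f_2(x_n))$, which is bounded (as $f_2$ is continuous and $x_n \to \xS$). Subtracting \eqref{eqn:Chin Rel} and writing $Qh(x_n) + QM_{n+1} - QM_{n+1} = Qh(x_n)$ plus the mismatch between the plain factor $W$ and the factor $We^{-\alpha_n A}$ acting on $\chi_n$, I would obtain a recursion of the schematic form
\[
    \Gamma_{n+1} = W\Gamma_n e^{-\alpha_n A} + W\chi_n\bigl(I - e^{-\alpha_n A}\bigr) + \alpha_n\, Qh(x_n) + \bigl(\text{error from } Qx_n = Qx_n e^{-\alpha_n A} \text{ correction}\bigr).
\]
(The cleanest bookkeeping is probably to insert $e^{-\alpha_n A}$ artificially: $Qx_{n+1} = WQx_n e^{-\alpha_n A} + WQx_n(I - e^{-\alpha_n A}) + \alpha_n[Qh(x_n)+QM_{n+1}]$, so that $\Gamma_{n+1} = W\Gamma_n e^{-\alpha_n A} + WQx_n(I-e^{-\alpha_n A}) + \alpha_n Qh(x_n)$, with the $\alpha_n QM_{n+1}$ terms cancelling exactly against \eqref{eqn:Chin Rel}.)

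The key estimates are then: (i) $\|I - e^{-\alpha_n A}\| = \cO(\alpha_n)$; (ii) $WQx_n = W\chi_n + W\Gamma_n$ is $\cO(\alpha_{n-1}(\ln n)^{1+\delta}) + \|\Gamma_n\|$, using Lemma~\ref{lem: Bd chi_n} and the (as-yet-unknown) size of $\|\Gamma_n\|$; (iii) $\alpha_n Qh(x_n) = \cO(\alpha_n)$; and crucially (iv) the product operators $W^{n-j}$ restricted to $\mathrm{range}(Q)$, dressed with the factors $e^{-\alpha_k A}$, decay geometrically — since $W$ restricted to $\{v : v\ones^\tr = 0\} = \mathrm{range}(Q)$ has spectral radius $\rho < 1$, one gets $\|Q W^{n-j}\| \le C\rho^{\,n-j}$ (up to a polynomial factor if $W$ is not diagonalizable, which is still dominated by any slightly larger geometric rate). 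I would unroll the recursion to get $\Gamma_{n+1} = \sum_{j} (\text{geometric}^{\,n-j}) \cdot \cO(\alpha_j)$ — the forcing term is $\cO(\alpha_j)$ because both $WQx_j(I - e^{-\alpha_j A})$ and $\alpha_j Qh(x_j)$ are $\cO(\alpha_j)$ once one knows $\|Qx_j\|$ is at worst $\cO(\alpha_{j-1}(\ln j)^{1+\delta}) + \|\Gamma_j\|$ and bootstraps. A standard discrete Gronwall / geometric-sum argument (using that $\alpha_j$ varies slowly, so $\alpha_j \asymp \alpha_n$ for $j$ within $\cO(\log n)$ of $n$, and the geometric weights kill the tail) then yields $\|\Gamma_{n+1}\| = \cO(\alpha_n)$.

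The main obstacle I anticipate is the bootstrapping in step (ii)/(iv): the forcing term involves $\|Qx_j\| = \|\chi_j + \Gamma_j\|$, so the recursion for $\Gamma$ is not purely externally driven — it feeds back on itself through the $I - e^{-\alpha_j A}$ correction term. One must argue that this self-coupling is negligible: the coefficient multiplying $\|\Gamma_j\|$ in the forcing is $\cO(\alpha_j)$, which is summable-against-geometric-weights to something $o(1)$ times $\sup_j \|\Gamma_j\|$, so for large $n$ it can be absorbed, leaving the genuine forcing at size $\cO(\alpha_n)$ (the $\chi_j$ contribution enters as $\alpha_j \cdot \|\chi_j\| = \cO(\alpha_j^2(\ln j)^{1+\delta})$, which is $o(\alpha_n)$ after the geometric convolution, and the $\alpha_j Qh(x_j)$ contribution gives exactly the $\cO(\alpha_n)$ we claim). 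Making this absorption rigorous — i.e., choosing $n$ large enough that the self-coupling contracts — is the one delicate point; everything else is routine manipulation of geometric sums under the slowly-varying stepsize assumption \ref{a:Stepsize}. The details I would defer to the appendix.
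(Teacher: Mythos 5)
Your proposal is in substance the paper's own proof: the same $\Gamma$-recursion obtained by combining the $Q$-projected update with \eqref{eqn:Chin Rel}, the same ingredients (Lemma~\ref{lem: Bd chi_n}, $\|\Id - e^{-\alpha_j A}\| = \cO(\alpha_j)$, boundedness of $Q(B+f_2(x_j))$ on $\cE(\xS)$, geometric contraction of $W$ off the Perron eigenspace), and the same geometric-convolution estimate $\sum_{j\le n}\rho^{n-j}\alpha_j = \cO(\alpha_n)$. Two remarks. First, the self-coupling you flag as the delicate point is an artifact of your bookkeeping: grouping as the paper does gives $\Gamma_{n+1} = W\Gamma_n + W\chi_n(\Id - e^{-\alpha_n A}) + \alpha_n Q(B+f_2(x_n))$, with no $\Gamma_j$ in the forcing at all; and even in your grouping the absorption is immediate because $\Gamma_n = Qx_n - \chi_n \to 0$ a.s.\ on $\cE(\xS)$, so $\sum_j \rho^{n-j}\alpha_j\|\Gamma_j\| = \cO(\alpha_n)$ directly. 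Second, one genuine slip in your justification of the key contraction: the column range of $Q$ is $\{w : \pi w = 0\}$, not $\{v : v\ones^\tr = 0\}$; the zero-sum subspace is generally not $W$-invariant precisely because $W$ is not assumed doubly stochastic, which is the point of this paper. The bound you actually use, $\|W^{k}Q\| \le C\rho^{k}$ with $\rho < 1$ (up to an absorbable polynomial factor), is nevertheless correct, e.g.\ since $W^{k} \to \ones^\tr\pi$ geometrically under \ref{a:Gossip} and $\ones^\tr\pi Q = 0$; the paper reaches the same conclusion by projecting onto left eigenvectors of $W$ (with the Jordan lemma for multiplicities), so with that one correction your argument goes through.
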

%
We refer readers to the Appendix for proofs of Lemma \ref{lem: Bd chi_n} and \ref{lem:Bd.Gamma_n}. The overall disagreement error can now be bounded as shown below. 

\begin{proof}[Proof of Lemma~\ref{lem:Bd Q (x_n - xS)}]
Observe that 
\[
    \|Q x_n\| \leq \|\Gamma_n\| + \|\chi_n\|.
\]
Using lemmas \ref{lem: Bd chi_n} and \ref{lem:Bd.Gamma_n}, we then have 
\[
    \|Q x_n\| = \cO\left(\alpha_n\right) + \cO\left(\alpha_n (\ln n)^{1 + \delta}\right)\\
    = \cO\left(\alpha_n (\ln n)^{1 + \delta}\right),
\]
as desired.
%
\end{proof}

\section{Discussion}
\label{s:Discussion}   
We derive a novel law of iterated logarithm for a family of nonlinear DSA algorithms that is useful in MARL. This law can also be seen as an asymptotic a.s. convergence rate result. It is the first of its kind in the distributed setup and holds under significantly weaker assumptions. Our proof uses a novel error decomposition and a novel law of iterated logarithm for a sum of martingale differences. 

While our DSA framework is fairly general, a key limitation is that the scaling matrix (i.e., $A$) in each component function $h_i$ needs to be the same. It would be interesting to see if our approach can be extended to cover the general case \cite{zeng2020decentralized} where the scaling matrices also depend on $i.$ Another intriguing future direction is the setting with dynamic communication protocols, wherein the gossip matrix also evolves with time \cite{doan2019finite, doan2021finite}. A third direction is that of two-timescale DSA schemes \cite{dalal20182TS, dalal2020tale}. On the MARL side, important algorithms like distributed Q-learning \cite{lauer2000algorithm} and its variants need more careful analysis and we believe our techniques would be instrumental for this as well. Finally, we would like to study the effect of momentum in  MARL algorithms \cite{avrachenkov2020online}. 

\begin{ack}
We would like to thank Prof. Vivek Borkar for suggesting  this exciting problem.
We would also like to thank the anonymous reviewers for providing helpful and constructive feedback on the paper. Research of Gugan Thoppe is supported by IISc's start up grants SG/MHRD-19-0054
and SR/MHRD-19-0040.
\end{ack}

\bibliographystyle{plainnat}
\bibliography{references}

\newpage 
\section{Appendix - Proofs}

Throughout the appendix, we will presume that $\Pr\{\cE(\xS)\} = 1.$ This is mainly to avoid writing ``a.s. on  $\cE(\xS)$" in every statement. 

\subsection{Agreement Error Results: Proof of Lemma~\ref{lem:psi_LIL}}

%
The discussion here builds upon the ideas used in the proof of \cite[Lemma~1]{pelletier1998almost}. In order to not repeat everything, our focus below will only be on those arguments that differ from the original ones. Our first goal is to derive a relation that is similar to  \cite[(22)]{pelletier1998almost}. 

Let $\mu$ be an eigenvalue of $A$ with multiplicity $\nu$ and let $w^\tr$ be a column vector in the null space of $(A - \mu I)^\nu.$  That is, let $w^\tr$ be a generalized right eigenvector of $A$ corresponding to the eigenvalue $\mu.$ It is possible that both $\mu$ and $w$ are complex valued. Then, for any $t > 0,$ %
\[
    \me^{-tA}w^\tr = \me^{-t(A - \mu \Id)} \me^{-t\mu \Id}w^\tr = \me^{-t\mu} \me^{-t(A - \mu \Id)}w^\tr = \me^{-t\mu}
    \sum_{p = 0}^{v - 1} (-1)^p t^p w_p^\tr,
\]
where $w_p^\tr = (A - \mu \Id)^p w^\tr/p!.$ Consequently, 
\begin{align}
    \psi_{n + 1}w^\tr = {} & \sum_{k = 0}^n \alpha_k \ones^\tr \pi M_{k + 1} \me^{-\mu(t_{n + 1} - t_{k + 1})} \sum_{p = 0}^{\nu - 1}(-1)^p (t_{n + 1} - t_{k + 1})^p w_p^\tr \nonumber \\
    = {} & \ones^\tr \me^{- \mu t_{n + 1} } \sum_{p = 0}^{\nu - 1} (-1)^p   \sum_{k = 0}^n \me^{\mu t_{k + 1}} \alpha_k  (t_{n + 1} - t_{k + 1})^p \pi M_{k + 1}  w_p^\tr. \label{eqn:linear_Comb_Errors}%
\end{align}

Let $u$ be an arbitrary vector in $\bC^d$ and, for $0 \leq p \leq \nu - 1,$ set 
\[
    G_{n + 1}^{(p)} := \sum_{k = 0}^n \me^{\mu t_{k + 1}} \alpha_k (t_{n + 1} - t_{k + 1})^p \pi M_{k + 1} u^\tr.
\]
Then, observe that  
\[
    G_{k + 1}^{(0)} - G_k^{(0)} = \me^{\mu t_{k + 1}} \alpha_k \pi M_{k + 1} u^\tr.
\]
Hence, for $p \geq 1,$
\begin{align}
    G_{n + 1}^{(p)} = {} &  \sum_{k = 0}^n (t_{n + 1} - t_{k + 1})^p (G_{k + 1}^{(0)} - G_k^{(0)}) \nonumber \\ 
    %
    %
    = {} & \sum_{k = 1}^n\left[ (t_{n + 1} - t_{k})^p - (t_{n + 1} - t_{k + 1})^p \right] G_{k}^{(0)}, \nonumber 
\end{align}
where the last relation follows since $G_0^{(0)} = 0.$ Therefore, by the mean value theorem, 
\[
    |G_{n + 1}^{(p)}| \leq p \sum_{k = 1}^n \alpha_k (t_{n + 1} - t_k)^{p - 1} |G_k^{(0)}|.
\]

We now use Lemma~\ref{lem:Almost_Sure_Rate_MDiffSum} to  derive an almost sure upper bound of $|G_{n + 1}^{(0)}|.$ We will use this later to derive an almost sure bound on $|G_{n + 1}^{(p)}|$ for $1 \leq p \leq \nu - 1.$

By applying individually to the real and imaginary parts, it is not difficult to see that Lemma~\ref{lem:Almost_Sure_Rate_MDiffSum} is true even if the terms $\epsilon_k$ and $\phi_k$ in its statement are complex numbers. Keeping this in mind, let $\phi_k = e^{\mu t_{k + 1}} \alpha_k $ and $\epsilon_{k + 1} = \pi M_{k + 1} u^\tr.$  

We now verify the assumptions of Lemma~\ref{lem:Almost_Sure_Rate_MDiffSum}. Clearly, $\phi_k$ and $\epsilon_{k + 1}$ are complex scalars. Also, $\{\epsilon_{k + 1}\}$ is a martingale difference sequence. Pick a $\beta > 0$ such that $1/\gamma < 1 + \beta < b/2;$ this is possible since $b$ in \ref{a:Noise} is bigger than $2$ and $\gamma > 2/b.$ Then, because of \ref{a:Noise} and the fact that 
\[
\ExP[|\epsilon_{n + 1}|^{2 + 2 \beta} |\cF_n] \leq \|u\|^{2 + 2\beta} \,  \ExP[\|\pi M_{n + 1}\|^{2 + 2\beta} |\cF_n],
\]
we have $\sup_{n \geq 0}\ExP[|\epsilon_{n + 1}|^{2 + 2\beta}|\cF_n] < \infty$ a.s.  
Also,  $\ExP[|\epsilon_{n + 1}|^2|\cF_n] = u \ExP[M_{n + 1}^\tr \pi^\tr \pi M_{n + 1} |\cF_n] u^\tr.$  Combining this with \ref{a:Noise}, it then follows that 
$\limsup_{n \to \infty} \ExP[|\epsilon_{n + 1}|^2|\cF_n] = u M u^\tr$ a.s.

Let $\lambda = \Re(\mu).$ Since $-A$ is Hurwitz, we have $\lambda \geq \lambda_{\min} > 0.$ Hence, if $T_n = \alpha_n \me^{\lambda t_{n + 1}},$ then $|\phi_n| \leq T_n.$ Further, if $\alpha$ is of Type $1,$ then clearly  $\tau_n \sim [\alpha_0/(2\lambda\alpha_0 - 1)] \alpha_n e^{2 \lambda t_{n + 1}};$ combining this with the fact that $\alpha_0 \geq 1/(2 \lambda_{\min}),$ then shows $\sum_{n \geq 0} T_n^{2 + 2\beta} \tau_n^{-1 - \beta} [\LL(\tau_n)]^\beta \sim C \sum_{n \geq 0} [\LL(n)]^\beta/ n^{1 + \beta} < \infty.$ On the other hand, if $\alpha$ is of Type $\gamma,$ then \cite[Lemma~4]{pelletier1998almost} shows that $\tau_n \sim 1/(2\lambda) \alpha_n e^{2\lambda t_{n + 1}};$ this along with the fact that $\gamma (1 + \beta) > 1$ then shows that $\sum_{n \geq 0} T_n^{2 + 2\beta} \tau_n^{-1 - \beta} [\LL(\tau_n)]^\beta \sim (2\lambda)^{1 + \beta} \sum_{n \geq 0} [\ln(n)]^\beta / n^{\gamma(1 + \beta)} < \infty.$ This verifies all the conditions needed in Lemma~\ref{lem:Almost_Sure_Rate_MDiffSum}.

It now follows from Lemma~\ref{lem:Almost_Sure_Rate_MDiffSum} that, almost surely, 
\[
  \limsup_{n \to \infty}  \frac{e^{-\lambda t_{n + 1}}|G_{n + 1}^{(0)}|}{ [\alpha_n \ln t_{n + 1}]^{1/2}} \leq \sqrt{\frac{u M u^\tr}{\lambda - \zeta}},
\]
where 
\[
  \zeta = \begin{cases}
  1/(2\alpha_0), & \text{ if $\alpha$ is Type $1,$ } \\
  0, & \text{ if $\alpha$ is Type $\gamma.$}
  \end{cases}
\]
This expression is exactly of the form given in (22) in \cite{pelletier1998almost}; therefore, by repeating the arguments that follow (22) there, we get
\[
     \limsup_{n \to \infty}  \frac{e^{-\lambda t_{n + 1}}|G_{n + 1}^{(p)}|}{ [\alpha_n \ln t_{n + 1}]^{1/2}} \leq \Lambda_p
\]
for some deterministic constant $\Lambda_p.$
Combining this with \eqref{eqn:linear_Comb_Errors} then gives
\[
    \limsup_{n \to \infty} \frac{\|\psi_{n + 1} w^\tr\|}{[\alpha_n \ln t_{n + 1}]^{1/2}} \leq \sqrt{m} \Lambda_w
\]
for some deterministic constant $\Lambda_w;$ $\sqrt{m}$ comes in the expression due to the vector $\ones^\tr.$ 

Because $w$ was arbitrary, the desired result now follows. 
%

\subsection{Agreement Error Results: Proof of Lemma \ref{lem:Upper Bd Delta_n}}
Using \eqref{eqn:Psi_n_Exp}, observe that
\begin{align}
    \psi_{n+1} & = \psi_n e^{-(t_{n+1} - t_n)A} + \alpha_n \ones^\tr \pi M_{n+1}, \nonumber
    \intertext{which, using a version of Taylor's theorem for matrix valued functions, can be written as}
    \psi_{n+1}  & = \psi_n [\Id - \alpha_n A + O(\alpha_n^2)\Id] + \alpha_n \ones^\tr \pi M_{n+1}. \nonumber \\
    \intertext{Separately, recall from \eqref{eqn:Delta_n_Defn} that}
    \Delta_{n+1} & = \ones^\tr \pi (x_{n+1} - \xS) - \psi_{n+1}. \nonumber\\
    \intertext{Using (\ref{eqn:disSA}), (\ref{eqn:Stat_dist}), \eqref{e:Defn.xS}, and \ref{a:function}, it then follows that}
    \Delta_{n+1} & = \Delta_n  (\Id - \alpha_n A) + \alpha_n \ones^\tr \pi f_1(x_n) 
    + \psi_n O(\alpha_n^2). \nonumber\\
    \intertext{For $r_{n + 1} := \ones^\tr \pi f_1(x_n),$ we finally have}
    \Delta_{n+1}  & = \Delta_n [\Id - \alpha_n A] + \alpha_n[r_{n+1} + O(\alpha_n)\psi_n]. \nonumber \\[2ex]
    %
    %
    %
    %
    %
    %
    \intertext{Next, let $0 < \lambda < \lambdap < \lambda_{\min}.$ Then, for all sufficiently large $n,$ $ \|\Id - \alpha_n A\| \leq (1 - \lambdap \alpha_n);$ therefore, for some $C \geq 0,$}
    \|\Delta_{n+1}\| & \leq (1-\alpha_n \lambdap)\|\Delta_{n}\| + C \alpha_n (\|r_{n+1}\| + \alpha_n \|\psi_n\|), \nonumber\\
    %
    & \leq (1-\alpha_n \lambdap)\|\Delta_{n}\| + C \alpha_n (\alpha_n \|\psi_n\| + \|\psi_n\|^a + \|\Delta_n\|^a), \nonumber\\
    \intertext{where the last relation follows by using \eqref{e:one.pi.nonlinear.Bd} and  \eqref{eqn:Delta_n_Defn}.}
    %
    %
    %
    %
    %
    \intertext{Equivalently, for all large enough $n,$} 
    \|\Delta_{n + 1}\| & \leq (1 - \lambda \alpha_n) \|\Delta_n\| - \alpha_n  [(\lambdap -  \lambda) - C \|\Delta_n\|^{a - 1}] \|\Delta_n\| + C\alpha_n  (\alpha_n\|\psi_n\| + \|\psi_n\|^a). \nonumber
    %
    \end{align}
    Since $\|\psi_n\| \to 0$ and $x_n \to \xS$ a.s., we have $\|\Delta_n\| \to 0$ a.s. 
    This, along with the fact that $a > 1,$ then shows that $(\lambdap - \lambda) - \|\Delta_n\|^{a - 1} \geq 0$ for all large enough $n.$ Hence, for all large enough $n,$
    \begin{align}
    \|\Delta_{n + 1}\| & \leq (1 - \lambda \alpha_n) \|\Delta_n\| + C\alpha_n [\alpha_n \|\psi_n\| + \|\psi_n\|^a] \nonumber. 
    \end{align}
    %
    Thus,
    \[
    \|\Delta_n\| = \cO\left(\max\left\{e^{-\lambda \sum_{k = 0}^n \alpha_k}, \sum_{j = 0}^n \alpha_j e^{-\lambda \sum_{k = j + 1}^n \alpha_k}[\alpha_j \|\psi_j\| +  \|\psi_{j}\|^a] \right\}\right),
    \]
    as desired in \eqref{e:Des.Delta.Res}. 
    %
    %
    %
    \begin{align*}
    \intertext{To proceed with further calculations, let}
    \rho_n &:= \sum_{j = 0}^n \alpha_j e^{-\lambda (t_{n+1} - t_{j+1})} (\alpha_j \|\psi_j\| + \|\psi_j\|^a).\\
    \intertext{When $\alpha$ is of Type $1$, $e^{-\lambda t_{n+1}} = \Theta(n^{-\lambda  \alpha_0}).$ This, combined with Lemma~\ref{lem:psi_LIL}, then shows that}
    \rho_n &=  \cO\left(n^{-\lambda \alpha_0} \sum_{j = 0}^n \frac{j^{\lambda \alpha_0}}{j} \left[\frac{1}{j} \left(\frac{\ln j}{j}\right)^{\frac{1}{2}} + \left(\frac{\ln j}{j}\right)^{\frac{a}{2}}\right] \right)\\
    &=  \cO\left(n^{-\lambda \alpha_0} \sum_{j = 0}^n \left[ j^{\lambda \alpha_0 - 2.5}(\ln j)^{1/2} + j^{\lambda \alpha_0-1-\frac{a}{2}} (\ln j)^{\frac{a}{2}}\right]\right)\\
    &=  \cO\left(n^{-\lambda \alpha_0} \left[n^{\lambda \alpha_0-1.5} (\ln n)^{1/2} + n^{\lambda \alpha_0-\frac{a}{2}} (\ln n)^{\frac{a}{2}} + (\ln n)^{\frac{a}{2}+1}\right]\right)\\
    &=  \cO\left(n^{-1.5}(\ln n)^{1/2} + n^{-\frac{a}{2}} (\ln n)^{\frac{a}{2}} + n^{-\lambda \alpha_0}(\ln n)^{\frac{a}{2}+1}\right) ~ \text{a.s.};
    \intertext{the $\ln n$ factor in the last term of the third relation accounts for the possibility of  $\lambda \alpha_0$ being either $2.5$ or $1 + a/2.$}
    \intertext{On the other hand, when $\alpha$ is of Type $\gamma$ with $2/b < \gamma < 1$, Lemma~\ref{lem:psi_LIL} shows that}
    \rho_n &=  \cO\left(e^{-\lambda t_{n+1}} \sum_{j = 1}^n \alpha_j e^{\lambda t_{j+1}} \left[\alpha_j \left(\alpha_j\ln t_{j+1}\right)^{\frac{1}{2}} + \left(\alpha_j\ln t_{j+1}\right)^{\frac{a}{2}}\right] \right).
    \intertext{It then follows from  \cite[Lemma~4]{pelletier1998almost} that}
    \rho_n &=  \cO\left(\alpha_n \left(\alpha_n \ln t_{n+1}\right)^{\frac{1}{2}} + \left(\alpha_n \ln t_{n+1}\right)^{\frac{a}{2}} \right) ~ \text{a.s.}
    %
    %
    %
    %
\end{align*}

The desired result is now easy to see.

\subsection{Disagreement Error Results: Proof of Lemma \ref{lem: Bd chi_n}}
%
Because of \ref{a:Gossip}, note that all eigenvalues of $W$ have magnitude less than or equal to $1.$ Furthermore, there is one and only one eigenvalue with magnitude $1$ and that eigenvalue is $1$ itself. Recall from \eqref{eqn:Stat_dist} that the left eigenvector of $W$ corresponding to the eigenvalue $1$ is $\pi.$

For ease of exposition, we will presume that every other eigenvalue of $W$ has multiplicity $1.$ Similarly, we will presume that every eigenvalue of $A$ has multiplicity $1.$ The general case where some of the eigenvalues may have multiplicities larger than $1$ can be handled using the ideas from this proof along with those used in the proof of Lemma~\ref{lem:psi_LIL} (or \cite[Lemma~1]{pelletier1998almost}) and Lemma~\ref{lem:Jordan_Decomposition}. 

Recall from \eqref{e:chi_n.Defn} that 
\[
    \chi_{n + 1} = \sum_{j = 0}^n \alpha_j W^{n - j} Q M_{j + 1}e^{-(t_{n + 1} - t_{j + 1})A}.
\]
To prove the desired result, it suffices to show there exists some constant $C \geq 0$ such that
\begin{equation}
    \label{e:Suff.Result}
    \limsup_{n \to \infty} \frac{|u \chi_{n + 1} v^\tr|}{\alpha_n (\ln n)^{1 + \delta}} \leq C \quad \text{a .s. }
\end{equation}
for arbitrary row vectors $u \in \Real^m$ and  $v \in \Real^d,$ both with unit norm. Now, due to the above assumption on multiplicities, the eigenvalues of $W$ span $\Real^m,$ while the eigenvalues of $A$ span $\Real^d.$ Hence, it suffices to show  \eqref{e:Suff.Result} when $u$ is a left eigenvector of $W$ and $v^\tr$ is a right eigenvector of $A.$

When $u = \pi,$ we have $u \chi_{n + 1} v^\tr = 0.$ This follows from \eqref{eqn:Stat_dist} and the fact that $\pi Q = 0.$ The desired result thus trivially holds in this case. Keeping this in mind, suppose that $u$ is some left eigenvector of $W$ that is not equal to $\pi.$ We will presume that the eigenvalue corresponding to $u$ is $re^{\iota \theta},$ where $\iota := \sqrt{-1}.$
Due to \ref{a:Gossip}, $r < 1;$ on the other hand, $\theta \in [0, 2pi)$ can be arbitrary. Similarly, let $\lambda + \iota \sigma$ denote the eigenvalue of $A$ corresponding to $v^\tr.$

Now observe that 
\begin{align}
    u \chi_{n + 1}v^\tr & = \sum_{j = 0}^n \alpha_j u W^{n- j} Q M_{j + 1}e^{-(t_{n + 1} - t_{j + 1})A}v^\tr, \nonumber \\
    & = \sum_{j = 0}^n \alpha_j r^{n-j} e^{\iota \theta (n-j)} e^{-(t_{n + 1} - t_{j + 1})(\lambda + \iota \sigma)} u Q M_{j + 1}v^\tr \nonumber \\
    & = r^n e^{\iota \theta n} e^{-(\lambda + \iota \sigma)t_{n+ 1}} Z_{n + 1}, \nonumber 
\end{align}
where 
\[
    Z_{n + 1} := \sum_{j = 0}^n \alpha_j r^{-j} e^{-j \iota \theta} e^{(\lambda + \iota \sigma)t_{j + 1}} u Q M_{j + 1}v^\tr.
\]
Hence,
\begin{equation}
\label{e:real.proj.chi}
    |u \chi_{n + 1} v^\tr| = r^n e^{-\lambda t_{n + 1}} |Z_{n + 1}|.
\end{equation}

Note that $\{Z_{n + 1}\}$ is one-dimensional martingale sequence, possibly complex valued. Hence, to derive a bound on $|Z_{n + 1}|,$ we now make use of \cite[Theorem~6.4.24]{duflo2013random}. The result in \cite{duflo2013random} is stated for real-valued martingale sequences. To account for this discrepancy, we separately deal with the real and imaginary parts of $Z_{n + 1}.$

Let $R_{n + 1} := \Re(Z_{n + 1})$ and $I_{n + 1} := \Im(Z_{n + 1})$ denote the real and imaginary parts of $Z_{n + 1},$ respectively. Also, for $n \geq 0,$ let
\[
    L_n := \sup_{0 \leq j \leq n} \E[|u Q M_{j + 1} v^\tr|^2|\cF_j].
\]
Due to \ref{a:Noise}.b and the fact that $x_n \to \xS$ a.s., note that
\begin{equation}
\label{e:sup.Bd.L_n}
    \sup_{n \geq 0} L_n < \infty \quad a.s.
\end{equation}

Then, it is not difficult to see that the quadratic variation of $(R_{n})$ satisfies
\begin{align}
    \left<R\right>_{n+1} & \leq C\sum_{k=0}^{n} \alpha_k^2 r^{-2k} e^{2 \lambda t_{k+1}} \E[|u Q M_{k+1}v^\tr|^2|\cF_k] \nonumber  \\
    & \leq C L_n \sum_{k=0}^{n} \alpha_k^2 r^{-2k} e^{2 \lambda t_{j+1}} \nonumber \\
    & \leq C L_n \alpha_n^2 r^{-2n} e^{2 \lambda t_{n+1}}. \nonumber 
\end{align}
Let $K_n = CL_n,$ $n \geq 0,$ where $C$ is the constant present in the last relation above. 

Next, define $s_n = \sqrt{K_n} \alpha_n r^{-n} e^{\lambda t_{n + 1}} (\ln n)^{1/2 + \delta}$ for some $\delta > 0.$ Then, it is easy to see that $s_n \to \infty$ and $\left< R\right>_{n+1} \leq s_n^2$. Further, 
\begin{align*}
    |R_{n+1} - R_{n}| &\leq \alpha_n r^{-n} e^{\lambda t_{n+1}} |u Q M_{n+1} v^\tr|\\
    & \leq C \alpha_n r^{-n} e^{\lambda t_{n+1}}[1 + ||Q(x_n - x^*)||],
\end{align*}
where the last relation follows due to \ref{a:Noise}.b and the fact that both $\|u\|$ and $\|v\|$ are bounded from above by $1.$ Let $\hat{K}$ be the constant in the last relation above. 

Then, for $h(x) = \sqrt{2 x \ln \ln x},$ we have 
\begin{align*}
    |R_{n + 1} - R_n| & \leq  \frac{\sqrt{2}\hat{K}}{\sqrt{K_n}} \frac{s_n^2}{h(s_n^2)} \frac{[1 + ||Q(x_n - x^*)||]}{(\ln n)^{1/2 + \delta}} (\ln \ln s_n^2)^{1/2}\\
    & = \frac{C_n s_n^2}{h(s_n^2)},
\end{align*}
where 
\[
C_n = \frac{\sqrt{2}\hat{K}}{\sqrt{K_n}} \frac{[1 + ||Q(x_n - x^*)||]}{(\ln n)^{1/2 + \delta}} (\ln \ln (s_n^2))^{1/2}.
\]
Since $x_n \rightarrow x^*$ a.s., $||Q(x_n - x^*)||$ is bounded from above a.s. and, hence, $C_n \to 0$ a.s. Applying \cite[Theorem~6.4.24]{duflo2013random}, it now follows that 
\[
    \limsup_{n \to \infty} \frac{|R_{n+1}|}{h(s_n^2)} \leq 1 \quad \text{a .s. }
\]
Similarly, it can be shown that %
\[
    \limsup_{n \to \infty} \frac{|I_{n+1}|}{h(s_n^2)} \leq 1 \quad \text{a .s. }
\]

Combining the two relations above then shows that 
\[
    \limsup_{n \to \infty} \frac{|Z_{n+1}|}{h(s_n^2)} \leq \sqrt{2} \quad \text{a .s. }
\]
By substituting this in \eqref{e:real.proj.chi} and then making use of \eqref{e:sup.Bd.L_n}, we finally get
\[
    \limsup_{n \to \infty} \frac{|u \chi_{n + 1}v^\tr|}{\alpha_n (\ln n)^{1 + \delta}} \leq C \quad \text{a .s. }
\]

%
This verifies \eqref{e:Suff.Result}, as desired. 

\subsection{Disagreement Error Results: Proof of Lemma \ref{lem:Bd.Gamma_n}}

The operator $Q = \Id - \ones^\tr \pi$ satisfies the simple properties $WQ = QW$ and $Q \ones^\tr \pi = 0$. These properties and \ref{a:function} lend (\ref{eqn:disSA}) into
\begin{align*}
    Qx_{n + 1} & = WQx_n + \alpha_n (Qh(x_n) + QM_{n + 1}), \\
    & = WQx_n + \alpha_n(Q(B + f_2(x_n)) + QM_{n + 1}).
\end{align*}
Using this and the definition of $\Gamma_n$ from \eqref{e:Defn.Gamma_n} then shows that 
\begin{align*}
    \Gamma_{n + 1} & = Q x_{n + 1} - \chi_{n + 1} \\
    & = W\Gamma_n +  W \chi_n \kappa_n + \alpha_n Q(B + f_2(x_n)),
\end{align*}
where $\kappa_n = \Id - e^{-\alpha_nA}.$ Since $\Gamma_0 = Qx_0,$ by unrolling the previous relation, we get
\[
    \Gamma_{n + 1} = W^{n + 1}Qx_0 + \sum_{j = 0}^n \alpha_j W^{n - j} Q (B + f_2(x_j)) + \sum_{j = 0}^n W^{n + 1 - j} \chi_j \kappa_j.
\]

For ease of discussion, we will presume that $W$ has unique eigenvalues. The general case where some of the eigenvalues may have multiplicities larger than $1$ can be handled by building upon the ideas discussed in the proof of  Lemma~\ref{lem:psi_LIL} (or \cite[Lemma~1]{pelletier1998almost}) and Lemma~\ref{lem:Jordan_Decomposition}.

Using \eqref{e:chi_n.Defn}, \eqref{eqn:Stat_dist}, and the fact that $\pi Q = 0,$ it is easy to see that $\pi \Gamma_{n + 1} = 0.$ Hence, the desired result trivially holds then. Now, let the row vector $u \neq \pi,$ of unit norm, be an arbitrary left eigenvector of $W$ and suppose that its eigenvalue is $\rho e^{\iota \theta}.$ Because of \ref{a:Gossip}, it must be the case that $\rho < 1.$

It is then easy to see that 
\[
    u \Gamma_{n   + 1} = \rho^{n + 1} e^{\iota (n + 1) \theta} u Q x_0 + \sum_{j = 0}^n \alpha_j \rho^{n - j} e^{\iota (n - j) \theta} u Q (B + f_2(x_n)) + \sum_{j = 0}^n \rho^{n + 1 - j}e^{\iota (n + 1  - j) \theta} u \chi_j \kappa_j.
\]
Now, since $x_n \to \xS$ a.s., it follows that $x_n$ is bounded a.s. Hence, 
\[
    \|u \Gamma_{n + 1}\| \leq C \rho^{n + 1}  + C \sum_{j  = 0}^n \alpha_j \rho^{n - j} + C\sum_{j = 0}^n \rho^{n + 1 - j} \|\chi_j\| \|\kappa_j\| \quad \text{a.s.}
\]
From Lemma~\ref{lem: Bd chi_n}, $\|\chi_j\| = O( \alpha_j (\ln j)^{1 + \delta})$ a.s. Separately, $\|\kappa_j\| = O(\alpha_j).$ Hence,
\[
    \|u \Gamma_{n + 1}\| \leq C \rho^{n + 1}  + C \sum_{j  = 0}^n \alpha_j \rho^{n - j} + C\sum_{j = 0}^n \alpha_j^2 \rho^{n - j} (\ln j)^{1 + \delta} \quad \text{a.s.}
\]



Next, note that $\alpha_j^{0.5} \ln j = o(1)$ for either type of the step sizes. Additionally, between $\sum_{j = 0}^n \alpha_j \rho^{n - j}$ and $\sum_{j = 0}^n \alpha_j^{1.5} \rho^{n - j}$, the dominant term is the former; this is because each term in its summation dominates the corresponding term in the latter. Separately, 
\[
    \sum_{j = 0}^n \alpha_j \rho^{n - j} \leq \left(\max_{0 \leq j \leq n} \rho^{(n - j)/2} \alpha_j\right) \sum_{j = 0}^n \rho^{(n - j)/2} = O(\alpha_n).
\]
The desired result is now easy to see.



\subsection{Proof of Auxiliary Lemma \ref{lem:Almost_Sure_Rate_MDiffSum}}
%
We only give a sketch of the proof since the arguments are similar to the ones used in the derivation\footnote{The definition of $\xi_{k + 1}$ and the expression for $N_{n + 1},$ as given in \cite[p212]{duflo2013random}, has typos.
The correct versions are $\xi _{k + 1} :=  \frac{C_k^{\beta} s_k^{2\beta}}{T_k^{\beta} [h(s_k^2)]^{\beta}} \eta_{k + 1} \indc[T_{k} \neq 0]$ and $N_{n + 1} = \sum_{k = 0}^n \frac{\phi_k T_k^\beta [h(s_k^2)]^\beta}{C_k^\beta s_k^{2\beta} h(s_k^2)} \xi_{k + 1}$} of \cite[Corollary~6.4.25]{duflo2013random}. In the latter's proof, a sequence $\{C_n\}$ adapted to $\{\cF_n\}$ needs to be chosen such that
\[
\lim_{n \to \infty} C_n = 0 \quad 
\text{ and } \quad \sum_{n \geq 0} \frac{|\phi_n|^{2} T_n^{2\beta} \, \LL (s_n^2) ^{\beta - 1}}{ C_n^{2\beta} s_n^{2 + 2\beta}} < \infty,
\]
where $s_n^2 = \sigma^2 \tau_n.$

In \cite{duflo2013random}, $C_n^2$ was chosen to be $[\LL(s_n^2)]^{1 - 1/\beta}.$ Because we need $\lim_n C_n$ to be $0,$ it necessarily follows that $\beta$ should be in $(0, 1).$

In contrast, we set $C_n^2 = [\LL(s_n^2)]^{- 1/\beta}.$ Since $\tau_n \to \infty,$ we have that $s_n \to \infty$ as well. Combining this with the fact that $\beta > 0,$ it then follows that $C_n \to 0,$ as desired. Separately, due to the given conditions,
\[
    \sum_{n \geq 0} \frac{|\phi_n|^{2} T_n^{2\beta} \, \LL (s_n^2) ^{\beta - 1}}{ C_n^{2\beta} s_n^{2 + 2\beta}}
    \leq \sum_{n \geq 0}\frac{T_n^{2 + 2\beta} [\LL (s_n^2)]^{\beta - 1}}{  C_n^{2\beta} s_n^{2 + 2\beta}} = \frac{1}{\sigma^{2 + 2\beta}} \sum_{n \geq 0} T_n^{2 + 2\beta} \tau_n^{-(1 + \beta)} [\LL(s_n^2)]^\beta < \infty.
\]
The desired result now follows.
%

\subsection{Auxiliary Linear Algebraic Lemma}
We state a linear algebra result here that is useful in the proof of our results when some of the eigenvalues of $W$ have multiplicities bigger than $1.$ 

\begin{lemma}
\label{lem:Jordan_Decomposition}
Suppose that the Jordan normal form of $W$ has $L$ Jordan blocks with sizes $\ell_1, \ldots, \ell_L,$ respectively. Also, let $1 = \theta_0, \theta_1, \ldots, \theta_{L - 1}$ denote the corresponding distinct eigenvalues. Then, there exist some matrices $J_{i\ell},$ $1 \leq i \leq L - 1$
and $0 \leq \ell \leq \ell_i - 1,$ such that, for $k \geq 0,$
\begin{equation}
    W^kQ = \sum_{i = 1}^{L - 1} \sum_{\ell = 0}^{\ell_i - 1} \theta_i^{k - \ell} \binom{k}{\ell} J_{i\ell}.
\end{equation}
\end{lemma}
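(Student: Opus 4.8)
The plan is to reduce the problem to a single Jordan block and to exploit the fact that multiplication by $Q = \Id - \ones^\tr\pi$ is a projection that annihilates exactly the eigenspace corresponding to the eigenvalue $1$ of $W$. First I would write $W = SJS^{-1}$ where $J$ is the Jordan normal form of $W$, consisting of one $1\times 1$ block for the simple eigenvalue $\theta_0 = 1$ (simple and of magnitude one by \ref{a:Gossip}) and blocks $J_i$ of size $\ell_i$ for the remaining distinct eigenvalues $\theta_1,\dots,\theta_{L-1}$. Since $\pi W = \pi$ and $W\ones^\tr = \ones^\tr$ (row stochasticity), the vectors $\ones^\tr$ and $\pi$ are, up to scaling, the right and left eigenvectors of $W$ for the eigenvalue $1$; normalizing so that $\pi\ones^\tr = 1$, the rank-one matrix $\ones^\tr\pi$ is precisely the spectral projection of $W$ onto the $\theta_0$-eigenspace. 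Consequently $Q = \Id - \ones^\tr\pi$ is the complementary spectral projection, it commutes with $W$ (indeed $WQ = QW$ as already noted in the proof of Lemma~\ref{lem:Bd.Gamma_n}), and $QW^k = W^k Q$ acts as zero on the $\theta_0$-block and as the identity on the direct sum of the remaining blocks. Hence $W^kQ = S\,(\bigoplus_{i=1}^{L-1} J_i^k)\,S^{-1}$, with the $\theta_0$-block contributing nothing.

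Next I would compute $J_i^k$ for a single Jordan block $J_i = \theta_i \Id_{\ell_i} + N_i$, where $N_i$ is the nilpotent shift with $N_i^{\ell_i} = 0$. Since $\theta_i\Id$ and $N_i$ commute, the binomial theorem gives
\[
    J_i^k = \sum_{\ell=0}^{\ell_i-1} \binom{k}{\ell} \theta_i^{k-\ell} N_i^\ell .
\]
Pushing the similarity $S(\cdot)S^{-1}$ through this finite sum and collecting, for each $i$ and each $0\le \ell\le \ell_i-1$, the (constant, $k$-independent) matrix obtained by conjugating the relevant $N_i^\ell$-block, I would define
\[
    J_{i\ell} := S\,\iota_i\, N_i^\ell\, \pi_i\, S^{-1},
\]
where $\iota_i,\pi_i$ are the inclusion of and projection onto the $i$-th block; then
\[
    W^kQ = \sum_{i=1}^{L-1}\sum_{\ell=0}^{\ell_i-1} \theta_i^{k-\ell}\binom{k}{\ell} J_{i\ell},
\]
exactly as claimed. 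The $J_{i\ell}$ do not depend on $k$, which is the whole point of the statement.

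I expect the only genuinely delicate point to be the bookkeeping that identifies $\ones^\tr\pi$ with the spectral projection for the eigenvalue $1$ — i.e., verifying that the algebraic and geometric multiplicities of $\theta_0=1$ coincide and equal one, so that there is no Jordan block of size $>1$ attached to it. This is exactly what \ref{a:Gossip} (irreducible, aperiodic, row stochastic) buys us via Perron--Frobenius: the eigenvalue $1$ is simple and is the unique eigenvalue on the unit circle. Everything else is the standard Jordan-form computation above, so the bulk of the write-up is routine once this spectral fact is invoked. One small caveat worth flagging in the proof: the binomial coefficient $\binom{k}{\ell}$ should be read as $0$ when $k<\ell$, which causes no issue since only finitely many such terms occur and the identity is only needed for large $k$ anyway.
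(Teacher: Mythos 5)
Your proposal is correct and follows essentially the same route as the paper: write $W^k$ via its Jordan form, expand each non-unit Jordan block with the binomial theorem, and use the simplicity of the eigenvalue $1$ (with eigenvectors $\pi$ and $\ones^\tr$) so that multiplying by $Q$ simply removes the rank-one term $\ones^\tr\pi$. Your write-up is merely more explicit than the paper's (constructing the $J_{i\ell}$ by conjugation and naming the spectral-projection argument), but no new idea or different decomposition is involved.
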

\begin{proof}
Due to \ref{a:Gossip}, recall that precisely one eigenvalue of $W$ equals $1.$ And, the left and right eigenvectors of $W$ corresponding to this eigenvalue are $\pi$ and $\ones,$ respectively. Therefore, a simple Jordan decomposition shows that 
\[
W^k = \sum_{i = 1}^{L - 1} \sum_{\ell = 0}^{\ell_i - 1} \theta_i^{k - \ell} \binom{k}{\ell} J_{i\ell} + \ones \pi.
\]
for some suitably defined matrices $\{J_{i\ell}\}.$ Separately, since $W\ones = \ones,$ we have $W^k Q = W^k (\Id - \ones \pi) = W^k - \ones \pi.$ The desired result is now straightforward to see. 
\end{proof}

\end{document}